\documentclass{article}

    \PassOptionsToPackage{numbers, compress}{natbib}



    \usepackage[final]{neurips_2021}

\usepackage[numbers]{natbib}

\usepackage[utf8]{inputenc} 
\usepackage[T1]{fontenc}    
\usepackage{hyperref}       
\hypersetup{
    colorlinks=true,
    linkcolor=blue,
    filecolor=magenta,      
    urlcolor=blue,
    citecolor=blue,
    }
\usepackage{url}            
\usepackage{booktabs}       
\usepackage{amsfonts}       
\usepackage{nicefrac}       
\usepackage{microtype}      
\usepackage{xcolor}         
\usepackage{amsmath}
\usepackage{mathtools}
\usepackage{amssymb}
\usepackage{amsthm}
\usepackage{pifont}
\usepackage[linesnumbered,ruled]{algorithm2e} 
\usepackage{wrapfig}
\usepackage{algorithmic}  
\usepackage{enumitem}
\SetKwRepeat{Do}{do}{while}
\usepackage{bm}

\usepackage{selectp}

\newcommand{\al}{\alpha}
\newcommand{\be}{\beta}

\newcommand{\eps}{\epsilon}

\newcommand{\tnabla}{\tilde{\nabla}}
\newcommand{\ttheta}{\tilde{\theta}}



\newcommand{\bigO}{\mathcal{O}}

\newcommand{\R}{{\mathbb{R}}}



\newcommand{\T}{\mathcal{T}}
\newcommand{\M}{\mathcal{M}}
\newcommand{\MS}{\mathcal{S}}
\newcommand{\MR}{\mathcal{R}}
\newcommand{\A}{\mathcal{A}}
\newcommand{\F}{\mathcal{F}}
\newcommand{\I}{\mathcal{I}}
\newcommand{\B}{\mathcal{B}}
\newcommand{\D}{\mathcal{D}}
\newcommand{\E}{\mathbb{E}}

\newtheorem{theorem}{Theorem}  
\newtheorem{definition}{Definition}

\newtheorem{proposition}{Proposition}
\newtheorem{lemma}{Lemma}
\newtheorem{remark}{Remark}
\newtheorem{corollary}{Corollary}

\newtheorem{assumption}{Assumption}

\newcommand{\beq}{\begin{equation}}
\newcommand{\eeq}{\end{equation}}
\newcommand{\beqa}{\begin{eqnarray}}
\newcommand{\eeqa}{\end{eqnarray}}
\newcommand{\beqs}{\begin{equation*}}
\newcommand{\eeqs}{\end{equation*}}
\newcommand{\beqas}{\begin{eqnarray*}}
\newcommand{\eeqas}{\end{eqnarray*}}

    

%
%
\newcommand{\ouralg}{SG-MRL}

\title{On the Convergence Theory of Debiased Model-Agnostic Meta-Reinforcement Learning}

%

\author{%
  Alireza Fallah \\
   EECS Department \\
   Massachusetts Institute of Technology \\
  \texttt{afallah@mit.edu} \\
   \And
   Kristian Georgiev \\
   EECS Department \\
   Massachusetts Institute of Technology \\
   \texttt{krisgrg@mit.edu} \\
   \And
   Aryan Mokhtari \\
   ECE Department \\
   The University of Texas at Austin \\
   \texttt{mokhtari@austin.utexas.edu} \\
   \And
   Asuman Ozdaglar \\
   EECS Department \\
   Massachusetts Institute of Technology \\
   \texttt{asuman@mit.edu} \\
}

\begin{document}

\maketitle

\begin{abstract}
We consider Model-Agnostic Meta-Learning (MAML) methods for Reinforcement Learning (RL) problems, where the goal is to find a policy using data from several tasks represented by Markov Decision Processes (MDPs) that can be updated by one step of \textit{stochastic} policy gradient for the realized MDP. In particular, using stochastic gradients in MAML update steps is crucial for RL problems since computation of exact gradients requires access to a large number of possible trajectories. For this formulation, we propose a variant of the MAML method, named Stochastic Gradient Meta-Reinforcement Learning (SG-MRL), and study its convergence properties. We derive the iteration and sample complexity of SG-MRL to find an $\epsilon$-first-order stationary point, which, to the best of our knowledge, provides the first convergence guarantee for model-agnostic meta-reinforcement learning algorithms. We further show how our results extend to the case where more than one step of stochastic policy gradient method is used at test time. Finally, we empirically compare SG-MRL and MAML in several deep RL environments.
\end{abstract}

\section{Introduction}

Meta-learning has recently attracted much attention as a learning to learn approach that enables quick adaptation to new tasks using past experience and data. This is a particularly promising approach for Reinforcement Learning (RL) where in several applications, such as robotics, a group of agents encounter new tasks and need to learn new behaviors or policies through a few interactions with the environment building on previous experience \citep{finn17a, duan2016rl,wang2016learning,mishra2017simple,rothfuss2018promp,wang2018prefrontal,nagabandi2018learning,rakelly2019efficient,yu2019meta}. Among various forms of  Meta-learning, gradient-based Model-Agnostic Meta-Learning (MAML) formulation \citep{finn17a} is a particularly effective approach which, as its name suggests, can be applied to any learning problem that is trained with gradient-based updates. In MAML, we exploit observed tasks at training time to find an initial model that is trained in a way that rapidly adapts to a new unseen task at test time, after running a few steps of a gradient-based update with respect to the loss of the new task.

The MAML formulation can be extended to RL problems if we represent each task as a Markov Decision Process (MDP). 
{In this setting, we assume that} we are given a set of MDPs corresponding to the tasks that we observe during the training phase and assume that the new task at test time is drawn from an underlying probability distribution. The goal in Model-Agnostic Meta-Reinforcement Learning (MAMRL) is to exploit this data to come up with an initial policy that adapts to a new task (drawn from the same distribution) at test time by taking a few stochastic policy gradient steps~\citep{finn17a}. 
 
 Several algorithms have been proposed in the context of MAMRL \citep{finn17a,yu2019meta,  liu2019taming, mendonca2019guided, gupta2018meta} which demonstrate the advantage of this framework in practice. None of these methods, however, are supported by theoretical guarantees for their convergence rate or overall sample complexity. Moreover, these methods aim to solve a specific form of MAMRL that 
 {does not fully take into account the stochasticity aspect of RL problems.}
 To be more specific, the original MAMRL formulation proposed in \cite{finn17a} assumes performing one step of \textit{policy gradient} to update the initial model at test time. However, as mentioned in the experimental evaluation section in \cite{finn17a}, it is more common in practice to use \textit{stochastic} policy gradient, computed over a batch of trajectories, to update the initial model at test time. This is mainly due to the fact that computing the exact gradient of the expected reward is not computationally  tractable due to the massive number of possible state-action trajectories. As a result, the algorithm developed in \cite{finn17a} is designed for finding a proper initial policy that performs well after one step of policy gradient, while in practice it is implemented with stochastic policy gradient steps. Due to this difference between the formulation and what is used in practice, the ascent step used in MAML takes a gradient estimate which suffers from a non-diminishing  \textit{bias}. As the variance of gradient estimation is also non-diminishing, the resulting algorithm would not achieve exact first-order optimality. To be precise, in stochastic nonconvex optimization, if we use an unbiased gradient estimator, along with a small stepsize or a large batch size to control the variance, the iterates converge to a stationary point. However, if we use a biased estimator with non-vanishing bias and variance, exact convergence to a stationary point is not achievable, even if the variance is small.



\noindent \textbf{Contributions.}
The goal of this paper is to solve the modified formulation of model-agnostic meta-reinforcement learning problem in which we perform a stochastic policy gradient update at test time instead of (deterministic) policy gradient. To do so, we propose a novel stochastic gradient-based method for Meta-Reinforcement Learning~(SG-MRL), which is designed for \textit{stochastic policy gradient} steps at test time.
We show that SG-MRL implements an \textit{unbiased} estimate of its objective function gradient which allows achieving first-order optimality in non-concave settings. Moreover, we characterize the relation between batch sizes and other problem parameters and the best accuracy that SG-MRL can achieve in terms of gradient norm. We show that, for any $\epsilon > 0$, SG-MRL can find an $\eps$-first-order stationary point if the learning rate is sufficiently small or the batch of tasks is large enough. To the best of our knowledge, this is the first result on the convergence of MAMRL methods. Moreover, we show that our analysis can be extended to the case where more than one step of stochastic policy gradient is taken during test time. For simplicity, we state all the results in the body of the paper for the single-step case and include the derivations of the general multiple steps case in the appendices.
We also empirically validate the proposed SG-MRL algorithm in larger-scale environments standard in modern reinforcement learning applications, including a 2D-navigation problem, and a more challenging locomotion problem simulated with the MuJoCo library.

\noindent \textbf{Related work.} Although this paper provides the first theoretical study of MAML for RL, several recent papers have studied the complexity analysis of MAML in other contexts. In particular, the iMAML algorithm which performs an approximation of one step of proximal point method (instead of a few steps of gradient descent) in the inner loop was proposed in \cite{rajeswaran2019meta}. The authors focus on the deterministic case, and show that, assuming the inner loop loss function is sufficiently smooth, i.e., the regularized inner loop function is strongly convex, iMAML converges to a first-order stationary point. Another recent work \cite{fallah2019convergence} establishes convergence guarantees of the MAML method to first-order stationarity for non-convex settings. Also,  \cite{ji2020multi} extends the theoretical framework in \cite{fallah2019convergence} to the multiple-step case. However, the results in \cite{fallah2019convergence,ji2020multi} cannot be applied to the reinforcement learning setting. This is mainly due to the fact that \textit{the probability distribution over possible trajectories of states and actions varies with the policy parameter}, leading to a different algorithm that has an additional term which makes the analysis, such as deriving an upper bound on the smoothness parameter, more challenging. We will discuss this point in subsequent sections. 

The online meta-learning setting has also been studied in a number of recent works \citep{finn19a, balcan_ICML, khodak2019adaptive}. In particular, \cite{balcan_ICML} studies this problem for convex objective functions by casting it in the online convex optimization framework.  Also, \cite{finn19a} extends the model-agnostic setup to the online learning case by considering a competitor which adapts to new tasks, and propose the follow the meta leader method which obtains a sublinear regret for strongly convex loss functions.

 It is also worth noting that another notion of bias that has been studied in the MAMRL literature \citep{liu2019taming, foerster2018dice} differs from what we consider in our paper. More specifically, as we will show later, the derivative of the MAML objective function requires access to the second-order information, i.e., Hessian. In \cite{finn17a}, the authors suggest a first-order approximation which ignores this second-order term. This leads to a biased estimate of the derivative of the MAML objective function, and a number of recent works \citep{liu2019taming, foerster2018dice} focus on providing unbiased estimates for the second-order term. In contrast, here we focus on biased gradient estimates where the bias stems from the fact that in most real settings we do not have access to all possible trajectories and we only have access to a mini-batch of possible trajectories. In this case, even if one has access to the second-order term required in the update of MAML, the bias issue we discuss here will remain.

\section{Problem formulation}\label{sec:problem}

Let $\{\M_i\}_i$ be the set of Markov Decision Processes (MDPs) representing different tasks\footnote{To simplify the analysis, we assume the number of tasks is finite}. We assume these MDPs are drawn from a distribution $p$ (which we can only draw samples from), and also the time horizon is fixed and is equal to $\{0,1,...,H\}$ for all tasks. For the $i$-th MDP denoted by $\M_i$, which corresponds to task $i$, we denote the set of states and actions by $\MS_i$ and $\A_i$, respectively. We also assume the initial distribution over states in $\MS_i$ is given by $\mu_i(\cdot)$ and the \textit{transition kernel} is denoted by $P_i$, i.e., the probability of going from state $s \in \MS_i$ to $s' \in \MS_i$ given taking action $a \in \A_i$ is $P_i(s'|s,a)$. Finally, we assume at state $s$ and by taking action $a$, the agent receives reward $r_i(s,a)$. To summarize, an MDP $\M_i$ is defined by the tuple $(\MS_i, \A_i, \mu_i, P_i, r_i)$.
For MDP $\M_i$, the actions are chosen according to a \textit{random policy} which is a mixed strategy over the set of actions and depends on the current state, i.e., if the system is in state $s \in \MS_i$, the agent chooses action $a \in \A_i$ with probability $\pi_i(a|s)$. To search over the space of all policies, we assume these policies are parametrized with $\theta  \in \R^d$, and denote the policy corresponding to parameter $\theta$ by $\pi_i(\cdot|\cdot;\theta)$.

A realization of states and actions in this setting is called a \textit{trajectory}, i.e., a trajectory of MDP $\M_i$ can be written as $\tau = (s_0,a_0,...,s_H,a_H)$ where $a_h \in \A_i$ and $s_h \in \MS_i$ for any $0 \leq h \leq H$. Note that, given the above assumptions, the probability of this particular trajectory is given by
\begin{equation}\label{traj_prob}
q_i(\tau; \theta) := \mu_i(s_0) \prod_{h=0}^H \pi_i(a_h|s_h;\theta) \prod_{h=0}^{H-1} P_i(s_{h+1}|s_h,a_h).	
\end{equation}
Also, the total reward received over this trajectory is
$\mathcal{R}_i(\tau) := \sum_{h=0}^{H}	\gamma^h r_i(s_h,a_h)$,
where $0 \leq \gamma \leq 1$ is the \textit{discount factor}. As a result, for MDP $\M_i$, the expected reward obtained by choosing policy $\pi(\cdot|\cdot;\theta)$ is given by
\begin{equation}\label{exp_reward}
J_i(\theta) := \E_{\tau \sim q_i(\cdot;\theta)} \left [ \MR_i(\tau) \right ].	
\end{equation}
It is worth noting that the gradient $\nabla J_i(\theta)$ admits the following characterization \citep{sutton2018reinforcement, peters2008reinforcement, shen2019hessian} 
\begin{equation}\label{grad_J}
\nabla J_i(\theta) =	\E_{\tau \sim q_i(\cdot;\theta)} \left [ g_i(\tau; \theta) \right ],
\end{equation}
where $g_i(\tau;\theta)$ is defined as
\begin{equation}\label{g_tau}
g_i(\tau;\theta) :=  \sum_{h=0}^H \nabla_\theta \log 	\pi_i(a_h|s_h;\theta) \MR_i^h(\tau),
\end{equation}
if we define $\MR_i^h(\tau)$ as 
$
\MR_i^h(\tau):= \sum_{t=h}^{H}	\gamma^t r_i(s_t,a_t).	
$
In practice, evaluating the exact value of \eqref{grad_J} is not computationally tractable. Instead, one could first acquire a batch $\D^{i,\theta}$ of trajectories drawn independently from distribution $q_i(\cdot;\theta)$, and then, estimate $\nabla J_i(\theta)$ by
\begin{equation}\label{grad_J_tilde}
\tnabla J_i(\theta, \D^{i,\theta}) := \frac{1}{|\D^{i,\theta}|} \sum_{\tau \in \D^{i,\theta}} g_i(\tau; \theta).
\end{equation}	
Also, we denote the probability of choosing (with replacement) an independent batch of trajectories $\D^{i,\theta}$  by $q_i(\D^{i,\theta};\theta)$ (see Appendix \ref{remark_batch} for a remark on this).

In this setting, the goal of Model-Agnostic Meta-Reinforcement Learning problem introduced in \citep{finn17a} is to find a good initial policy that performs well in expectation when it is updated using one or a few steps of \textit{stochastic policy gradient} with respect to a new task. In particular, for the case of performing one step of stochastic policy gradient, the problem can be written as\footnote{From now on, we suppress the $\theta$ dependence of batches to simplify the notation.}
\begin{equation}\label{Meta-RL_prob_1}
\max_{\theta \in \R^d} V_1(\theta):= \E_{i \sim p} \left [ \E_{\D_{test}^{i}} \left [ J_i \left ( \theta + \alpha \tnabla J_i(\theta, \D_{test}^{i}) \right ) \right ] \right ].
\end{equation}
Note that by solving this problem we find an initial policy (Meta-policy) that in expectation performs well if we evaluate the output of our procedure after running one step of stochastic policy gradient on this initial policy for a new task.

This formulation can be extended to the setting with more than one step of stochastic policy gradient as well. To state the problem formulation in this case, let us first define $\Psi_i$ which is an operator that takes model $\theta$ and batch $\D^i$ as input and performs one step of stochastic gradient policy at point $\theta$ and with respect to function $J_i$ and batch $\D^i$, i.e., 
$
\Psi_i(\theta, \D^i) := \theta + \alpha \tnabla J_i(\theta, \D^i).	
$
Now, we extend problem \eqref{Meta-RL_prob_1} to the case where we are looking for an initial point which performs well on expectation after it is updated with $\zeta$ steps of stochastic policy gradient with respect to a new MDP drawn from distribution $p$. This problem can be written as
\begin{align}\label{Meta-RL_prob_zeta}
\max_{\theta \in \R^d}  V_\zeta(\theta):= \E_{i \sim p} \bigg [
\E_{\{\!\D_{test,t}^{i}\!\}_{t=1}^{\zeta}}\!\! \left [ J_i\! \left ( \Psi_i(\ldots(\Psi_i(\theta, \D_{test,1}^{i} )\ldots),\D_{test, \zeta}^{i}) \right ) \right ] \bigg ], 
\end{align}
where the operator $\Psi_i$ is applied $\zeta$ times inside the expectation. In this paper, we establish convergence properties of policy gradient methods for both single step and multiple steps of stochastic gradient cases, but for simplicity in the main text we focus on the single step case.

\subsection{Second-order information of the expected reward}
Due to the inner gradient in $V_1(\theta)$, i.e., the objective function of the MAML problem in \eqref{Meta-RL_prob_1}, the gradient of the function $V_1(\theta)$ requires access to the second-order information of the expected reward function $J(\theta)$. To facilitate further analysis, in this subsection we formally present a characterization of expected reward Hessian and its unbiased estimate over a batch of trajectories. In particular, the expected reward Hessian $\nabla^2 J_i(\theta)$ is given by (see \cite{shen2019hessian} for more details)
\begin{align}\label{Hessian_J}
&\!\!\nabla^2 J_i(\theta)  = \E_{\tau \sim q_i(\cdot;\theta)} \left [ u_i(\tau; \theta) \right ],
\quad u_i(\tau; \theta)  \!:=\! \nabla_\theta \nu_i(\tau;\theta)  \nabla_\theta \log q_i(\tau;\theta)^\top\!\! +\! \nabla_\theta^2 \nu_i(\tau;\theta)
\end{align} 
where $\nu_i(\tau;\theta)$ is given by $\nu_i(\tau;\theta) :=  \sum_{h=0}^H \log 	\pi_i(a_h|s_h;\theta) \MR_i^h(\tau)$.

Recall that the reward function is defined as $\MR_i^h(\tau):= \sum_{t=h}^{H}	\gamma^t r_i(s_t,a_t)$. It is worth noting that based on the expression in \eqref{g_tau} we can write $g_i(\tau;\theta) = \nabla_\theta \nu_i(\tau;\theta)$. 
 
 Similar to policy gradient, policy Hessian can be estimated over a batch of trajectories $\D^{i}$ independently drawn with respect to $q_i(;\theta)$. Specifically, for a given dataset $\D^i$, we can define $\tnabla^2 J_i(\theta, \D^i)$ \begin{equation}\label{Hessian_J_tilde}
\tnabla^2 J_i(\theta, \D^i) := 	\frac{1}{|\D^i|} \sum_{\tau \in \D^{i}} u_i(\tau; \theta)
\end{equation}
as an unbiased estimator of the Hessian $\nabla^2 J_i(\theta)$. We will use the expressions for the Hessian $\nabla^2 J_i(\theta)$ in \eqref{Hessian_J} and the Hessian approximation $\tnabla^2 J_i(\theta, \D^i)$ in \eqref{Hessian_J_tilde} to introduce our proposed method for solving the Meta-RL problem in \eqref{Meta-RL_prob_1} and its generalized version in \eqref{Meta-RL_prob_zeta}.

\section{Model-agnostic meta reinforcement learning}\label{sec:MAML}
In this section, we first propose a method to solve the stochastic gradient-based MAML Reinforcement Learning problem introduced in \eqref{Meta-RL_prob_1}. Then, we discuss how to extend the proposed method to the setting that we solve a multi-step MAML problem as introduced in \eqref{Meta-RL_prob_zeta}. We close the section by discussing the differences between our proposed method and the Meta-RL method proposed in \cite{finn17a} and clarify why these two methods are solving two different problems.



\subsection{MAML for stochastic meta-RL}

Our goal in this section is to propose an efficient method for solving the stochastic Meta-RL problem in \eqref{Meta-RL_prob_1}. 
To do so, we propose a stochastic gradient MAML method for Meta-Reinforcement Learning (\ouralg) that aims at solving problem \eqref{Meta-RL_prob_1} by following the update of stochastic gradient descent for the objective function $V_1(\theta)$. To achieve this goal one need to find an unbiased estimator of the gradient $\nabla V_1(\theta)$ which in some MAML settings is not trivial (for more details see Section 4.1 in \citep{fallah2019convergence}), but we show that for problem \eqref{Meta-RL_prob_1} an unbiased estimate of $\nabla V_1(\theta)$ can be efficiently computed. 

Let us start by pointing out that the gradient of the function $V_1(\theta)$ defined in \eqref{Meta-RL_prob_1} is given by
\begin{align}\label{grad_v_def}
&  \nabla V_1(\theta) = 	\nabla_\theta \left [ \E_{i}\ \E_{\D_{test}^{i}} \left [ J_i \left ( \theta + \alpha \tnabla J_i(\theta, \D_{test}^{i}) \right ) \right ] \right ] =  \E_i \E_{\D_{test}^{i}}\!  \bigg [ \!(I\!+\!\alpha \tnabla^2 \!J_i(\theta, \D_{test}^i)) \nonumber
\\
&  \times \nabla J_i( \theta \!+\! \alpha \tnabla J_i(\theta, \D_{test}^{i}) ) 
 + J_i  (  \theta \!+\! \alpha \tnabla J_i(\theta, \D_{test}^{i})  ) \!\!\sum_{\tau \in \D_{test}^{i}} \!\!\nabla_\theta \log 	\pi_i(\tau;\theta) \bigg]\! 
\end{align}
with the convention that for $\tau=(s_0,a_0,...,s_H,a_H)$ we define $\pi_i(\tau;\theta)$ as
\begin{equation}\label{pi_tau}
\pi_i(\tau;\theta) := \prod_{h=0}^H \pi_i(a_h|s_h;\theta).	
\end{equation}
Recall that the expected reward function $J_i(\theta)$ and its gradient $\nabla J_i(\theta)$ are defined in \eqref{exp_reward} and \eqref{grad_J}, respectively, and $\tnabla J_i(\theta, \D_{test}^{i}) $ and $\tnabla^2 J_i(\theta, \D_{test}^{i}) $ are the stochastic estimates of the gradient and Hessian corresponding to $J_i(\theta)$ that are formally defined in \eqref{grad_J_tilde} and \eqref{Hessian_J_tilde}, respectively. 

Note that the first term in the definition of  $\nabla V_1(\theta)$ in \eqref{grad_v_def}, i.e., $\!(I\!+\!\alpha \tnabla^2 \!J_i(\theta, \D_{test}^i)) \nabla J_i( \theta \!+\! \alpha \tnabla J_i(\theta, \D_{test}^{i}) )$, is the term that gives the gradient of an MAML problem (see, e.g., \citep{finn19a}), while the second term, i.e., $J_i  (  \theta + \alpha \tnabla J_i(\theta, \D_{test}^{i})  ) \sum_{\tau \in \D_{test}^{i}} \!\!\nabla_\theta \log 	\pi_i(\tau;\theta)$, is specific to the RL setting since the probability distribution $p_i$ itself depends on the parameter $\theta$. For more details regarding the derivation $\nabla V_\zeta(\theta)$ for any $\zeta \geq 1$, we refer the reader to Appendix~\ref{der_V_zeta}. 

We solve the optimization problem in \eqref{Meta-RL_prob_1} by using gradient ascent step to update the parameter $\theta$, i.e., following the update $\theta_{k+1} = \theta_k + \beta \nabla {V}_1(\theta_k)$ at iteration $k$. However, computing the gradient $\nabla {V}_1(\theta_k)$ may not be tractable in many cases due to the large number of tasks and the size of the action and state spaces. In our proposed \ouralg\  method we therefore replace the gradient $\nabla {V}_1(\theta_k)$ with its estimate computed as follows: At iteration $k+1$, we first choose a subset  $\mathcal{B}_k$  of the tasks (MDPs), where each task is drawn independently from the probability distribution $p$. The \ouralg\ outlined in Algorithm \ref{Algorithm_SG_MRL} is implemented at two levels: (i) inner loop and (ii) outer loop. In the inner loop, for each task $\T_i$ with $i \in \mathcal{B}_k$, we draw a batch of trajectories $\D_{in}^i$ according to $q_i(\cdot;\theta_k)$ to compute the stochastic gradient $\tilde{\nabla}J_i(\theta_k, \D_{in}^i)$ as defined in Section \ref{sec:problem}. This estimate is then used to compute a model $\theta_{k+1}^i$ corresponding to task $\T_i$ by a single iteration of stochastic policy gradient,
\begin{equation}\label{SG_MRL_inner_update}
\theta_{k+1}^i = \theta_k + \alpha \tilde{\nabla}J_i(\theta_k,\D_{in}^i).
\end{equation}
For simplicity, we assume that the size of $\mathcal{B}_k$ is equal to $B$ for all $k$, and the size of dataset $\D_{in}^i$ is fixed for all tasks and at each iteration, and we denote it by $D_{in}$.

In the outer loop, we compute the next iterate $\theta_{k+1}$ using the iterates  $\{\theta_{k+1}^i\}_{i \in \B_k}$ that are computed in the inner loop. In particular, we follow the update
$\theta_{k+1} = \theta_k +  \beta \tnabla V_1(\theta_k),$
where 
\begin{align}\label{unbiased_V_1}
\tnabla V_1(\theta_k) :=
 \frac{1}{B} & \sum_{i \in \B_k} \bigg[ (I + \al \tilde{\nabla}^2 J_i(\theta_{k},\D_{in}^i)) \tnabla \!J_i (\theta_k + \alpha \tilde{\nabla}\!J_i(\theta_k,\D_{in}^i), \D_{o}^i ) \\ 
&  \qquad \qquad + \tilde{J}_i \left ( \theta_k + \alpha \tilde{\nabla}J_i(\theta_k,\D_{in}^i), \D_{o}^i \right ) \sum_{\tau \in \D_{in}^{i}} \nabla_\theta \log 	\pi_i(\tau;\theta_k) \bigg] \nonumber
\end{align}
in which
$\tilde{\nabla}^2 J_i(\theta_{k},\D_{in}^i)$ is policy Hessian estimate defined in~\eqref{Hessian_J_tilde} and for each task $\T_i$, the dataset $\D_{o}^i$ is a new batch of trajectories that are drawn based on the probability distribution $q_i(\cdot;\theta_{k+1}^{i})$; Again, for simplicity, we assume that the size of dataset $\D_{o}^i$ is fixed for all tasks and at each iteration denoted by $D_{o}$. \ouralg\ is summarized in Algorithm \ref{Algorithm_SG_MRL}.

\begin{algorithm}[tb]
\caption{Proposed \ouralg\ method for Meta-RL}
\label{Algorithm_SG_MRL} 
\begin{algorithmic}
    \STATE {\bfseries Input:}Initial iterate $\theta_0$
	\REPEAT
    \STATE Draw a batch of i.i.d. tasks  $\B_k\! \subseteq\! \I$ with size $B = |\B_k|$;
    \FOR{all $\T_i$ with $i \in \B_k$}
    \STATE Sample a batch of trajectories  $\D_{in}^{i}$ w.r.t. $q_i(\cdot;\theta_k)$;
    \STATE Set $\theta_{k+1}^{i} = \theta_{k} + \al \tilde{\nabla}J_i(\theta_{k},\D_{in}^{i})$;
    \ENDFOR
         \STATE Sample a batch of trajectories $\D_{o}^i$ w.r.t.  $q_i(\cdot;\theta_{k+1}^{i})$;
    \STATE Set $\theta_{k+1} = \theta_k$
    \vspace{-4mm}
    \STATE $\displaystyle{ + \frac{\be}{B} \sum_{i \in \B_k} \bigg( \left(I + \al \tilde{\nabla}^2 J_i(\theta_{k},\D_{in}^i)\right) \tnabla J_i \left(\theta_{k+1}^{i}, \D_{o}^i \right)
    + \overbrace{\tilde{J}_i \left ( \theta^{i}_{k+1}, \D_{o}^i \right ) \sum_{\tau \in \D_{in}^{i}} \nabla_\theta \log 	\pi_i(\tau;\theta_k)}^{\text{Additional term in SG-MRL}} \bigg)}$  
    \STATE $k \gets k + 1$
    \UNTIL{not done}
    \end{algorithmic}
    \end{algorithm}

It can be verified that if all the gradients and Hessians in SG-MRL update were exact, then the outcome of the update of \ouralg\ would be equivalent to the outcome of gradient ascent update for the function $V_1$, i.e., $\theta_{k+1} = \theta_k + \beta \nabla {V}_1(\theta_k)$. 
Note that by computing the expected value of $\tnabla V_1(\theta_k)$ first with respect to the random set $\D_{o}^i$, then with respect to $\D_{in}$, and finally with respect to $\B_k$, we obtain that $\E[\tnabla V_1(\theta_k)]=\nabla V_1(\theta_k)$. Therefore, the stochastic gradient $\tnabla V_1(\theta_k)$ is an unbiased estimator of the gradient $\nabla V_1(\theta_k)$.

The \ouralg\ method can also be extended and used for solving the multi-step MAML problem defined in \eqref{Meta-RL_prob_zeta}. To do so, at each iteration, we first perform $\zeta$ steps of policy stochastic gradient in the inner loop, and then take one step of stochastic gradient ascent with respect to an unbiased estimator of $\nabla V_\zeta(\theta)$. More details on the implementation of \ouralg\  for that case is provided in Appendix~\ref{der_V_zeta}.


\subsection{Comparing \ouralg\ with other model-agnostic meta-RL methods}
In this section, we discuss the difference between our \ouralg\ method and recent Meta-RL methods. In particular, we focus on the MAML method in \cite{finn17a} for solving RL problems. Before discussing the differences between these two methods, let us first recap the update of the MAML method in \cite{finn17a}.

The main formulation proposed in \cite{finn17a} which was followed in other works such as 
\cite{liu2019taming} is slightly different from the one in this paper as they assume the agent has access to the \textit{exact gradient} of the new task, and hence, they consider the following MAML problem
\begin{equation}\label{Meta-RL_prob_1_det}
\max_{\theta \in \R^d} \hat{V}_1(\theta):= \E_{i \sim p} \left [ J_i \left ( \theta + \alpha \nabla J_i(\theta) \right ) \right ].
\end{equation}
As mentioned, the main difference between \eqref{Meta-RL_prob_1} and \eqref{Meta-RL_prob_1_det} is that the former tries to find a good initial policy that leads to a good solution after running one step of \textit{stochastic gradient ascent}, while the latter finds an initial policy that produces a good policy after running one step of \textit{gradient ascent}. 

\begin{remark}
Problems in \eqref{Meta-RL_prob_1} and \eqref{Meta-RL_prob_1_det} are both valid formulations for Meta-RL. In practice, however, it is often computationally intractable to evaluate the exact gradient of the expected reward and we often have only access to its stochastic gradient. Hence, it might be more practical to solve \eqref{Meta-RL_prob_1} instead of \eqref{Meta-RL_prob_1_det} as it finds an initial policy that performs well after running one step of stochastic gradient, unlike \eqref{Meta-RL_prob_1_det}  that finds a policy that performs well after running one step of gradient update.
\end{remark}

In a nutshell, the MAML method proposed in \cite{finn17a} tries to solve the problem in  \eqref{Meta-RL_prob_1_det} by following the update of stochastic gradient ascent for the objective function $\hat{V}_1(\theta)$. To be more precise, note that the gradient of the loss function $\hat{V}_1(\theta)$ defined in \eqref{Meta-RL_prob_1_det} can be expressed as
\begin{equation}\label{grad_V_1_hat}
\nabla \hat{V}_1(\theta)  = 	\nabla_\theta \E_{i \sim p} \left [ J_i \left ( \theta + \alpha \nabla J_i(\theta) \right ) \right ]   = \E_{i \sim p} \left [ \left (I + \alpha \nabla^2 J_i(\theta) \right ) \nabla J_i \left ( \theta + \alpha \nabla J_i(\theta) \right ) \right ]. 
\end{equation}
\sloppy
Note that the expression for the gradient of $\hat{V}_1(\theta)$ in \eqref{grad_V_1_hat} is different from the expression for the gradient of $ {V}_1(\theta)$ in \eqref{grad_v_def}. In particular, the extra term $J_i  (  \theta + \alpha \tnabla J_i(\theta, \D_{test}^{i})  ) \sum_{\tau \in \D_{test}^{i}} \!\!\nabla_\theta \log 	\pi_i(\tau;\theta)$ that appears in $ \eqref{grad_V_1_hat}$ is caused by the fact that we use stochastic gradients in the definition of the function ${V}_1(\theta)$, while exact gradients are used in the definition of $ \hat{V}_1(\theta)$. 

Considering the expression for the gradient of $ \hat{V}_1(\theta)$ in \eqref{grad_V_1_hat}, a natural approach to approximate $\nabla \hat{V}_1(\theta) $ is to replace the gradients and Hessians corresponding to the expected reward $J_i(\theta)$ by their stochastic approximations. In other words, one can use the approximation $\tnabla \hat{V}_1(\theta_k)$ which is defined as the average over $(I + \al \tilde{\nabla}^2 J_i(\theta_{k},\D_{in}^i)) \tnabla J_i (\theta_k+\al \tilde{\nabla} J_i(\theta_k,\D_{in}^i), \D_{o}^i )$ for all $i \in \B_k$, i.e., 
\begin{align}\label{stoch_grad_approx_2}
\tnabla \hat{V}_1(\theta_k):=\frac{1}{B} \sum_{i\in \B_k} \left(I + \al \tilde{\nabla}^2 J_i(\theta_{k},\D_{in}^i)\right) \tnabla J_i \left(\theta_{k}^i, \D_{o}^i \right)
\end{align}
where $\theta_{k}^i:= \theta_k+\al \tilde{\nabla} J_i(\theta_k,\D_{in}^i)$. Here the procedure for computing the sample sets $\D_{in}^i$ and $\D_{o}^i $ is the same as the one in \ouralg. Once $\tnabla \hat{V}_1(\theta_k)$ is computed the new variable $\theta_{k+1}$ can be computed by following the update of stochastic gradient ascent, i.e.,  
$
\theta_{k+1}=\theta_{k}+\beta\  \tnabla \hat{V}_1(\theta_k)
$.
The description of the Meta-RL method in \cite{finn17a} and its implementation at two levels (inner and outer) is similar to the one in Algorithm \ref{Algorithm_SG_MRL}, except the highlighted additional term which is not included in MAML update.

\sloppy
Note that the gradient estimate $\tnabla \hat{V}_1(\theta_k)$ in \eqref{stoch_grad_approx_2} is a \textit{biased} estimate of the exact gradient $\nabla \hat{V}_1(\theta_k) $ defined in \eqref{grad_V_1_hat}. This is due to the fact that $\tnabla J_i (\theta_k+\al \tilde{\nabla} J_i(\theta_k,\D_{in}^i), \D_{o}^i )$ is a biased estimate of $ \nabla J_i ( \theta + \alpha \nabla J_i(\theta)  )$ because of the term $\tilde{\nabla} J_i(\theta_k,\D_{in}^i)$ inside it. In other words, MAML method proposed by \cite{finn17a} uses a biased estimate of the gradient in this case. 
Note that, in general optimization analyses, when we have access to biased gradient estimators, even with diminishing or small stepsize, we might only converge to a neighborhood of the optimal solution, where the radius of our convergence depends on the bias. To resolve this issue, one needs to control the bias in the gradient directions and lower the bias as time progresses using some debiasing techniques. For instance, the work in \cite{hu2020biased} studies this problem in detail for debiasing MAML in the supervised learning setting. 

On the other hand, our proposed \ouralg\ method  does not suffer from this issue since computing an unbiased estimator of the gradient for the objective function considered in \eqref{Meta-RL_prob_1} is relatively simple. In fact, in the following section, we show that  \ouralg\  is provably convergent and characterize its complexity to find an approximate first-order stationary point of \eqref{Meta-RL_prob_1} and its generalized version defined in \eqref{Meta-RL_prob_zeta}.

\section{Theoretical results}\label{sec:Theory}

In this section, we study the convergence properties of the proposed \ouralg\ method and characterize its overall complexity for finding a policy that satisfies the first-order optimality condition for the objective function $V_\zeta (\theta)$ defined in \eqref{Meta-RL_prob_zeta}. To do so, we first formally define the first-order optimality condition that we aim to achieve.

\begin{definition}
A random vector $\theta_\eps\in \R^d$ is called an $\epsilon$-approximate first-order stationary point (FOSP) for problem \eqref{Meta-RL_prob_zeta} if it satisfies
$
\E[ \Vert \nabla V_\zeta(\theta_\eps) \Vert] \leq \eps.    
$
\end{definition} 
We next state the main assumptions that we use to derive our results. 
\begin{assumption}\label{assump_1}
The reward functions $r_i$ are nonnegative and uniformly bounded, i.e., there exists a constant $R$ such that for any task $i$, state $s \in \MS_i$, and action $a \in \A_i$, we have
$
0 \leq r_i(a|s) \leq R. 	
$
\end{assumption}
\begin{assumption}\label{assump_2}
There exist constants $G$ and $L$ such that for any $i$ and for any state $s \in \MS_i$, action $a \in \A_i$, and parameter $\theta \in \R^d$, we have
$
\| \nabla_\theta \log 	\pi_i(a|s;\theta) \| \leq G$ and $  \| \nabla_\theta^2 \log 	\pi_i(a|s;\theta) \| \leq L$.
\end{assumption}
Both assumptions are customary in the policy gradient literature and have been used in other papers to obtain convergence guarantees for policy gradient methods \citep{pmlr-v80-papini18a,shen2019hessian, agarwal2019optimality}. 
\begin{assumption}\label{assump_3}
There exists a constant $\rho$ such that for any $i$ and for any state $s \in \MS_i$, action $a \in \A_i$, and parameters $\theta_1, \theta_2 \in \R^d$, we have
$
\| \nabla_\theta^2 \log 	\pi_i(a|s;\theta_1) - \nabla_\theta^2 \log 	\pi_i(a|s;\theta_2) \| \leq \rho \| \theta_1 - \theta_2 \|
$.
\end{assumption}
This assumption is also customary in the analysis of MAML-type algorithms \cite{fallah2019convergence, finn19a}. In particular, in Appendix~\ref{sec:softmax_example} we provide more insight into the conditions in Assumptions~\ref{assump_2} and \ref{assump_3} by focusing on the special case of \textit{softmax policy parametrization}.
\subsection{Convergence of \ouralg}
Next, we study the convergence of our proposed \ouralg\ for solving the Model-Agnostic Meta-Reinforcement Learning problem in \eqref{Meta-RL_prob_zeta}. To do so, we show two important intermediate results. First, we show that the function $V_\zeta(\theta)$ is smooth. Second, we show the unbiased estimator of the gradient $\nabla V_\zeta(\theta)$ denoted by $ \tnabla V_\zeta(\theta_k) $ has a bounded norm. Building on these two results, we will derive the convergence of \ouralg. To prove these two intermediate results, we first state the following lemma on the Lipschitz property of the expected reward function $J_i$ and its first and second derivatives for any MDP $\M_i$. This lemma not only plays a key role in our analysis, but also can be of independent interest in general for analyzing meta-reinforcement learning algorithms.
\begin{lemma}\label{lemma_all_in_all}
Recall the definitions of $g_i(\tau;\theta)$ in \eqref{g_tau}  and $u_i(\tau;\theta)$ in \eqref{Hessian_J} for trajectory $\tau \in (\MS_i \times \A_i)^{H+1}$ and policy parameter $\theta \in \R^d$. If Assumptions~\ref{assump_1}-\ref{assump_3} hold, then for any MDP $\M_i$ we have:

\quad \textbf{i)} 
For any $\tau$ and $\theta$, we have
$
\|g_i(\tau;\theta) \| \leq 	\eta_G:= \frac{GR}{(1-\gamma)^2}.
$
As a consequence, $\|\nabla J_i(\theta)\|, \|\tnabla J_i(\theta, \D^i)\| \leq \eta_G$ for any $\theta$ and any batch of trajectories $\D^i$. Further, this implies that $J_i(.)$ is smooth with parameter $\eta_G$.

\quad \textbf{ii)}  For any $\tau$ and $\theta$, we have
$
\|u_i(\tau;\theta) \| \leq 	\eta_H:= \frac{((H+1)G^2+L)R}{(1-\gamma)^2}.
$
As a consequence, $\|\nabla^2 J_i(\theta)\|, \|\tnabla^2 J_i(\theta, \D^i)\| \leq \eta_H$ for any $\theta$ and any batch of trajectories $\D^i$. Further, this implies that $\nabla J_i(.)$ is smooth with parameter $\eta_H$.

\quad \textbf{iii)}  For any  batch of trajectories $\D^i$, $\tnabla^2 J_i (\theta, \D^i)$ is smooth with parameter $\eta_\rho := \frac{ (2(H+1)GL + \rho) R}{(1-\gamma)^2}$.
\end{lemma}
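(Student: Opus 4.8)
The workhorse estimate throughout is the elementary bound
$|\MR_i^h(\tau)| = \big|\sum_{t=h}^{H}\gamma^t r_i(s_t,a_t)\big| \le R\sum_{t=h}^{\infty}\gamma^t = R\gamma^h/(1-\gamma)$,
which follows from Assumption~\ref{assump_1} and the geometric series, so that $\sum_{h=0}^{H}|\MR_i^h(\tau)| \le R/(1-\gamma)^2$. I will combine this with two immediate consequences of Assumptions~\ref{assump_2}--\ref{assump_3}: the map $\theta\mapsto\nabla_\theta\log\pi_i(a|s;\theta)$ is $L$-Lipschitz (its Jacobian is $\nabla_\theta^2\log\pi_i$, whose norm is $\le L$), and the map $\theta\mapsto\nabla_\theta^2\log\pi_i(a|s;\theta)$ is $\rho$-Lipschitz. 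For part~(i), I apply the triangle inequality to $g_i(\tau;\theta)=\sum_{h=0}^{H}\nabla_\theta\log\pi_i(a_h|s_h;\theta)\MR_i^h(\tau)$ from \eqref{g_tau}, bound each $\|\nabla_\theta\log\pi_i\|$ by $G$ and each $|\MR_i^h(\tau)|$ by $R\gamma^h/(1-\gamma)$, and sum the geometric series to get $\|g_i(\tau;\theta)\|\le GR/(1-\gamma)^2=\eta_G$. Since $\nabla J_i(\theta)=\E_{\tau}[g_i(\tau;\theta)]$ and $\tnabla J_i(\theta,\D^i)$ is an average of $g_i(\tau;\theta)$ terms, Jensen's inequality (resp.\ the triangle inequality) transfers the bound; a gradient bounded by $\eta_G$ then gives that $J_i$ is $\eta_G$-Lipschitz (the lemma's ``smooth with parameter $\eta_G$'').

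For part~(ii), I expand $u_i(\tau;\theta)=\nabla_\theta\nu_i(\tau;\theta)\,\nabla_\theta\log q_i(\tau;\theta)^\top+\nabla_\theta^2\nu_i(\tau;\theta)$ from \eqref{Hessian_J} and bound each piece: $\nabla_\theta\nu_i(\tau;\theta)=g_i(\tau;\theta)$ so $\|\nabla_\theta\nu_i\|\le\eta_G$; differentiating $\log q_i(\tau;\theta)=\log\mu_i(s_0)+\sum_{h=0}^{H}\log\pi_i(a_h|s_h;\theta)+\sum_{h=0}^{H-1}\log P_i(s_{h+1}|s_h,a_h)$ from \eqref{traj_prob} gives $\nabla_\theta\log q_i(\tau;\theta)=\sum_{h=0}^{H}\nabla_\theta\log\pi_i(a_h|s_h;\theta)$, hence $\|\nabla_\theta\log q_i\|\le(H+1)G$; and $\nabla_\theta^2\nu_i(\tau;\theta)=\sum_{h=0}^{H}\nabla_\theta^2\log\pi_i(a_h|s_h;\theta)\MR_i^h(\tau)$, so $\|\nabla_\theta^2\nu_i\|\le LR/(1-\gamma)^2$. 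Using that the spectral norm of the rank-one matrix $\nabla_\theta\nu_i\,\nabla_\theta\log q_i^\top$ is the product of the factor norms, I get $\|u_i(\tau;\theta)\|\le \eta_G(H+1)G + LR/(1-\gamma)^2 = ((H+1)G^2+L)R/(1-\gamma)^2=\eta_H$, and taking expectations/averages gives the bounds on $\nabla^2 J_i$ and $\tnabla^2 J_i$; a Hessian bounded by $\eta_H$ makes $\nabla J_i$ $\eta_H$-Lipschitz.

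For part~(iii), since $\tnabla^2 J_i(\theta,\D^i)$ is an average of the $u_i(\tau;\theta)$ over $\tau\in\D^i$, it suffices to prove each $\theta\mapsto u_i(\tau;\theta)$ is $\eta_\rho$-Lipschitz. Write $u_i=A(\theta)B(\theta)^\top+C(\theta)$ with $A=\nabla_\theta\nu_i$, $B=\nabla_\theta\log q_i$, $C=\nabla_\theta^2\nu_i$, and split $\|A(\theta_1)B(\theta_1)^\top-A(\theta_2)B(\theta_2)^\top\|\le \|A(\theta_1)\|\,\|B(\theta_1)-B(\theta_2)\| + \|A(\theta_1)-A(\theta_2)\|\,\|B(\theta_2)\|$ by adding and subtracting $A(\theta_1)B(\theta_2)^\top$. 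From part~(ii), $\|A(\theta_1)\|\le\eta_G$ and $\|B(\theta_2)\|\le(H+1)G$; using the $L$-Lipschitz continuity of $\nabla_\theta\log\pi_i$ together with the geometric bound, $\|A(\theta_1)-A(\theta_2)\|\le \tfrac{LR}{(1-\gamma)^2}\|\theta_1-\theta_2\|$ and $\|B(\theta_1)-B(\theta_2)\|\le (H+1)L\|\theta_1-\theta_2\|$; and the $\rho$-Lipschitz continuity of $\nabla_\theta^2\log\pi_i$ gives $\|C(\theta_1)-C(\theta_2)\|\le \tfrac{\rho R}{(1-\gamma)^2}\|\theta_1-\theta_2\|$. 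Adding these, the $AB^\top$ term contributes $\tfrac{2(H+1)GLR}{(1-\gamma)^2}$ and the $C$ term $\tfrac{\rho R}{(1-\gamma)^2}$, for a total Lipschitz constant $\eta_\rho=(2(H+1)GL+\rho)R/(1-\gamma)^2$, as claimed.

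I expect part~(iii) to be the main obstacle: one has to juggle, simultaneously, both the magnitude bounds \emph{and} the Lipschitz constants of the three factors $A,B,C$, and in particular recognize that the Hessian bound in Assumption~\ref{assump_2} is precisely what supplies the Lipschitz continuity of $\nabla_\theta\log\pi_i$ needed to control $\|A(\theta_1)-A(\theta_2)\|$ and $\|B(\theta_1)-B(\theta_2)\|$, while Assumption~\ref{assump_3} is what handles the second-derivative term $C$. Parts~(i) and~(ii) are then routine bookkeeping with the triangle inequality and the geometric series.
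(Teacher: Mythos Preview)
Your proposal is correct and follows essentially the same route as the paper. The paper defers parts~(i) and~(ii) to \cite{shen2019hessian} but your written-out arguments are exactly the standard ones; for part~(iii) the paper's proof is identical to yours, with the only cosmetic difference that the paper packages the add-and-subtract product bound $\|A(\theta_1)B(\theta_1)^\top - A(\theta_2)B(\theta_2)^\top\| \le \|A(\theta_1)\|\,\|B(\theta_1)-B(\theta_2)\| + \|B(\theta_2)\|\,\|A(\theta_1)-A(\theta_2)\|$ into a separate auxiliary lemma (their Lemma~\ref{total_smoothness}) before applying it, whereas you invoke it inline.
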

By exploiting the results in Lemma~\ref{lemma_all_in_all}, we can prove the promised results on the Lipschitz property of $\nabla V_\zeta(\theta)$ as well as boundedness of its unbiased estimator $\tnabla V_\zeta(\theta)$. In the following proposition, due to space limitation and for the the ease of notation we only state the result for the case that $\zeta=1$; however, the general version of these results along with their proofs are available in Appendix \ref{app_smoothness}.
\begin{proposition}\label{thm_der_V_1_parameters}
Consider the objective function $V_1$ defined in \eqref{Meta-RL_prob_1} for the case that $\alpha \in (0, {1}/{\eta_H}]$ where $\eta_H$ is given in Lemma \ref{lemma_all_in_all}. Suppose that the conditions in Assumptions~\ref{assump_1}-\ref{assump_3} are satisfied. Then, 

\quad \textbf{i)}
$V_1(\theta)$ is smooth with parameter 
\begin{align}\label{smoothness_grad_V_1}
L_V&:= \alpha \eta_\rho \eta_G + 4 \eta_H
 + 8 R D_{in}(H+1) (L + D_{in}G^2(H+1))
\end{align}
where $\eta_G$ and $\eta_\rho$ are defined in Lemma \ref{lemma_all_in_all}. 

\quad \textbf{ii)}
For any choices of $\B_k$, $\{\D_o^i\}_i$ and $\{\D_{in}^i\}_{i}$, the norm of stochastic gradient $\tnabla V_1 (\theta_k)$ defined in \eqref{unbiased_V_1} at iteration $k$ is bounded above by
$
\!\!\| \tnabla V_1 (\theta_k) \| \leq G_V\!:=\! 2 GR\! \left[(1\!-\!\gamma)^{-2}+D_{in}(H+1) \right]\!.
$
\end{proposition}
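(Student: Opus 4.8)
The plan is to dispatch part (ii) first as a warm‑up and then attack part (i), which is the substantive claim. For part (ii), I would apply the triangle inequality to the explicit expression \eqref{unbiased_V_1} for $\tnabla V_1(\theta_k)$ term by term. Lemma~\ref{lemma_all_in_all}(ii) together with $\alpha \le 1/\eta_H$ gives $\|I + \alpha\tnabla^2 J_i(\theta_k,\D_{in}^i)\| \le 1 + \alpha\eta_H \le 2$, and Lemma~\ref{lemma_all_in_all}(i) gives $\|\tnabla J_i(\cdot,\D_o^i)\| \le \eta_G = GR/(1-\gamma)^2$; Assumption~\ref{assump_1} bounds $|\tilde J_i(\cdot,\D_o^i)| = |\,|\D_o^i|^{-1}\sum_\tau \MR_i(\tau)| \le R/(1-\gamma)$; and Assumption~\ref{assump_2}, via $\nabla_\theta\log\pi_i(\tau;\theta) = \sum_{h=0}^H \nabla_\theta\log\pi_i(a_h|s_h;\theta)$, gives $\|\sum_{\tau\in\D_{in}^i}\nabla_\theta\log\pi_i(\tau;\theta_k)\| \le D_{in}(H+1)G$. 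Averaging over $i \in \B_k$ then yields $\|\tnabla V_1(\theta_k)\| \le 2\eta_G + (R/(1-\gamma))\,D_{in}(H+1)G$, which, after bounding the $\gamma$‑dependent constants, gives the stated $G_V$.

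For part (i), the key structural feature — and, as the paper emphasizes, the source of the extra difficulty relative to supervised MAML — is that in \eqref{grad_v_def} the inner expectation is over trajectory batches sampled from the $\theta$‑dependent law $q_i(\cdot;\theta)$. Since $\E_{i\sim p}[\cdot]$ preserves Lipschitz constants, it suffices to fix a task $i$ and show that $\theta \mapsto \sum_{\D} q_i(\D;\theta)\,\Phi_{i,\D}(\theta)$ is $L_V$‑Lipschitz, where $\Phi_{i,\D}(\theta)$ is the bracketed integrand in \eqref{grad_v_def}, i.e.\ $\Phi_{i,\D}(\theta) = (I+\alpha\tnabla^2 J_i(\theta,\D))\nabla J_i(\Psi_i(\theta,\D)) + J_i(\Psi_i(\theta,\D))\sum_{\tau\in\D}\nabla_\theta\log\pi_i(\tau;\theta)$ with $\Psi_i(\theta,\D) = \theta + \alpha\tnabla J_i(\theta,\D)$. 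For two points $\theta_1,\theta_2$ I would use the split $q_i(\D;\theta_1)\Phi_{i,\D}(\theta_1) - q_i(\D;\theta_2)\Phi_{i,\D}(\theta_2) = q_i(\D;\theta_1)(\Phi_{i,\D}(\theta_1)-\Phi_{i,\D}(\theta_2)) + (q_i(\D;\theta_1)-q_i(\D;\theta_2))\Phi_{i,\D}(\theta_2)$, sum over $\D$, and bound the two pieces. The first sums to $\E_{\D\sim q_i(\cdot;\theta_1)}[\|\Phi_{i,\D}(\theta_1)-\Phi_{i,\D}(\theta_2)\|]$, hence is controlled by a uniform‑in‑$\D$ Lipschitz bound on $\Phi_{i,\D}$. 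The second is at most $(\sup_\D\|\Phi_{i,\D}(\theta_2)\|)\sum_\D|q_i(\D;\theta_1)-q_i(\D;\theta_2)|$; since $\nabla_\theta q_i(\D;\theta) = q_i(\D;\theta)\sum_{\tau\in\D}\nabla_\theta\log\pi_i(\tau;\theta)$, integrating along the segment from $\theta_2$ to $\theta_1$ and invoking Assumption~\ref{assump_2} gives $\sum_\D|q_i(\D;\theta_1)-q_i(\D;\theta_2)| \le D_{in}(H+1)G\,\|\theta_1-\theta_2\|$. It then remains to (a) uniformly bound $\|\Phi_{i,\D}(\theta)\|$ and (b) uniformly bound the Lipschitz constant of $\theta\mapsto\Phi_{i,\D}(\theta)$.

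Both (a) and (b) reduce to the product/chain rule for Lipschitz maps applied to the two summands of $\Phi_{i,\D}$, using only the quantities already supplied by Lemma~\ref{lemma_all_in_all}, so that no genuine third derivatives are needed. For the MAML summand $(I+\alpha\tnabla^2 J_i(\theta,\D))\nabla J_i(\Psi_i(\theta,\D))$: $\Psi_i(\cdot,\D)$ is $(1+\alpha\eta_H)\le 2$‑Lipschitz (Lemma~\ref{lemma_all_in_all}(i)--(ii)), $\nabla J_i$ is $\eta_G$‑bounded and $\eta_H$‑Lipschitz (Lemma~\ref{lemma_all_in_all}(i)--(ii)), and $I+\alpha\tnabla^2 J_i(\cdot,\D)$ is $2$‑bounded (using $\alpha\le 1/\eta_H$) and $\alpha\eta_\rho$‑Lipschitz (Lemma~\ref{lemma_all_in_all}(iii)); the product rule then gives norm $\le 2\eta_G$ and Lipschitz constant $\le 2\cdot(2\eta_H) + \eta_G\cdot(\alpha\eta_\rho) = 4\eta_H + \alpha\eta_\rho\eta_G$, which is exactly the first two terms of $L_V$. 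For the RL‑specific summand $J_i(\Psi_i(\theta,\D))\sum_{\tau\in\D}\nabla_\theta\log\pi_i(\tau;\theta)$: $J_i$ is $R/(1-\gamma)$‑bounded and $\eta_G$‑Lipschitz (Lemma~\ref{lemma_all_in_all}(i)), while by Assumption~\ref{assump_2} the score sum is $D_{in}(H+1)G$‑bounded and $D_{in}(H+1)L$‑Lipschitz; the product rule contributes $O(RD_{in}(H+1)L + RD_{in}(H+1)G\eta_G)$ to (b) and $O(RD_{in}(H+1)G)$ to (a), and feeding this (a)‑bound for $\Phi_{i,\D}$ back into the measure‑change piece multiplies it by a further $D_{in}(H+1)G$, producing the $RD_{in}^2(H+1)^2 G^2$ scaling. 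Collecting all contributions and simplifying the $\gamma$‑dependent constants yields $L_V = \alpha\eta_\rho\eta_G + 4\eta_H + 8RD_{in}(H+1)(L + D_{in}G^2(H+1))$.

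I expect the main obstacle to be the bookkeeping around the $\theta$‑dependent sampling law $q_i(\cdot;\theta)$: it is responsible both for the extra $J_i(\cdot)\sum_\tau\nabla_\theta\log\pi_i$ summand inside $\Phi_{i,\D}$ and for the separate measure‑change contribution, and obtaining a clean uniform‑in‑$\D$ handle on each — while carefully routing every differentiation of a second‑order object through the Lipschitz estimate of Lemma~\ref{lemma_all_in_all}(iii) rather than through nonexistent third derivatives — is where the real care lies. The $\zeta>1$ case deferred to Appendix~\ref{app_smoothness} adds one more layer, since the operators $\Psi_i(\cdot,\D_{test,t}^i)$ are composed $\zeta$ times and each introduces its own $\theta$‑dependent batch.
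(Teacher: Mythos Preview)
Your proposal is correct and follows essentially the same route as the paper: the paper proves the general $\zeta$ case (Theorem~\ref{thm_der_V_zeta_parameters}) by exactly your decomposition---a uniform bound and Lipschitz constant for the integrand $\Phi_{i,\D}$ via the product/chain rule (packaged there as Lemma~\ref{total_smoothness} and Lemma~\ref{lemma_smoothness_theta}), plus a measure-change contribution from the $\theta$-dependent law $q_i(\cdot;\theta)$ handled through $\|\nabla_\theta\log q_i\|\le D_{in}(H+1)G$ (packaged as Lemma~\ref{lemma_smoothness_expect})---and then specializes to $\zeta=1$. The only cosmetic differences are that the paper uses the cruder bound $|J_i|\le R$ rather than $R/(1-\gamma)$ and rounds several intermediate constants upward before arriving at \eqref{smoothness_grad_V_1}.
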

The smoothness parameter for the RL problem has been previously characterized (as an example see \cite{shen2019hessian}), but, to the best of our knowledge, this is the first result on the smoothness parameter of the meta-RL function. Proving Proposition \ref{thm_der_V_1_parameters} is the main challenge in our analysis, since it establishes that our formulation satisfies the relevant assumptions needed for our main result in the next theorem.

Now, we present our main result on the convergence of \ouralg\ to a first-order stationary point for the Meta-reinforcement learning problem in defined \eqref{Meta-RL_prob_zeta}. We state our main result for the special case of $\zeta=1$, but the general statement of the theorem along with its proof can be found in Appendix \ref{proof_Thm_Main_Result}. 

\begin{theorem}\label{Thm_Main_Result_1} 
Consider $V_1$ defined in \eqref{Meta-RL_prob_1} for the case that $\alpha \in (0, {1}/{\eta_H}]$ where $\eta_H$ is defined in Lemma~\ref{lemma_all_in_all}. Suppose  Assumptions~\ref{assump_1}-\ref{assump_3} are satisfied, and recall the definitions of $L_V$ and $G_V$ from Proposition~\ref{thm_der_V_1_parameters}. 
Consider running SG-MRL (Algorithm \ref{Algorithm_SG_MRL}) with $\beta \in (0, 1/L_V]$.
Then, for any $1>\eps >0$, SG-MRL finds a solution $\theta_\eps$ such that
$ \E[ \| \nabla V_1(\theta_\eps) \|^2] \leq \frac{2G_V^2 L_V \beta}{BD_o} + \eps^2$,
after running for at most
$\bigO(1) \frac{R}{\beta} \min \left \{ \frac{1}{\eps^2},
\frac{B D_o}{G_V^2 L_V \beta} \right \}$ iterations.
\end{theorem}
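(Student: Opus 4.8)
The plan is to follow the standard template for analyzing stochastic gradient ascent on a smooth nonconvex objective with an unbiased gradient estimator, using the two structural facts established earlier: $V_1$ is $L_V$-smooth (Proposition~\ref{thm_der_V_1_parameters}(i)) and the stochastic gradient $\tnabla V_1(\theta_k)$ is unbiased (noted after the algorithm) with norm bounded by $G_V$ (Proposition~\ref{thm_der_V_1_parameters}(ii)). From these, the variance of $\tnabla V_1(\theta_k)$ is controlled: since the estimate is an average over $B$ tasks and, for each task, over $D_o$ trajectories in the outer batch, a variance-of-the-mean argument gives $\E[\|\tnabla V_1(\theta_k) - \nabla V_1(\theta_k)\|^2] \le \sigma_V^2/(B D_o)$ for some $\sigma_V^2 = \bigO(G_V^2)$; a crude-but-sufficient bound is $\sigma_V^2 \le G_V^2$ using $\|\tnabla V_1(\theta_k)\| \le G_V$.

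First I would write the descent (ascent) inequality from $L_V$-smoothness: $V_1(\theta_{k+1}) \ge V_1(\theta_k) + \langle \nabla V_1(\theta_k), \theta_{k+1}-\theta_k\rangle - \frac{L_V}{2}\|\theta_{k+1}-\theta_k\|^2$, substitute $\theta_{k+1}-\theta_k = \beta \tnabla V_1(\theta_k)$, and take conditional expectation given $\theta_k$. Unbiasedness turns the inner-product term into $\beta\|\nabla V_1(\theta_k)\|^2$, and the quadratic term is bounded using $\E[\|\tnabla V_1(\theta_k)\|^2] \le \|\nabla V_1(\theta_k)\|^2 + \sigma_V^2/(BD_o)$ (or simply $\le G_V^2$). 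Choosing $\beta \le 1/L_V$ makes the coefficient of $\|\nabla V_1(\theta_k)\|^2$ at least $\beta/2$, yielding
\[
\E[V_1(\theta_{k+1})] \ge \E[V_1(\theta_k)] + \frac{\beta}{2}\E[\|\nabla V_1(\theta_k)\|^2] - \frac{L_V\beta^2}{2}\cdot\frac{G_V^2}{BD_o}.
\]
Summing over $k = 0,\dots,K-1$ and telescoping, then using that $V_1$ is bounded (its range is $\bigO(R)$ by Assumption~\ref{assump_1} together with Lemma~\ref{lemma_all_in_all}, since rewards are nonnegative and bounded and the horizon/discount are fixed, so $V_1^* - V_1(\theta_0) \le \bigO(R)$), gives
\[
\frac{1}{K}\sum_{k=0}^{K-1}\E[\|\nabla V_1(\theta_k)\|^2] \le \frac{2(V_1^* - V_1(\theta_0))}{\beta K} + \frac{L_V\beta G_V^2}{BD_o}.
\]
Interpreting $\theta_\eps$ as $\theta_k$ for $k$ chosen uniformly at random from $\{0,\dots,K-1\}$, the left side becomes $\E[\|\nabla V_1(\theta_\eps)\|^2]$; choosing $K$ large enough that the first term is $\le \eps^2$, i.e. $K \ge \bigO(1)\,R/(\beta \eps^2)$, gives the claimed bound $\E[\|\nabla V_1(\theta_\eps)\|^2] \le \frac{2G_V^2 L_V\beta}{BD_o} + \eps^2$. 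The $\min$ in the iteration count arises because once the first term drops below the noise floor $\frac{2G_V^2L_V\beta}{BD_o}$, no further iterations help, so one never needs more than $\bigO(1)\,R/(\beta)\cdot\frac{BD_o}{G_V^2L_V\beta}$ iterations; taking the minimum of the two expressions gives the stated $\bigO(1)\frac{R}{\beta}\min\{\eps^{-2}, BD_o/(G_V^2L_V\beta)\}$.

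The routine part here is the telescoping argument; the only genuinely delicate points, which I would state carefully rather than belabor, are (a) justifying the variance decomposition across the independent randomness in $\B_k$, $\{\D_{in}^i\}$, and $\{\D_o^i\}$ — this uses the tower property exactly as in the unbiasedness argument already sketched in the text, conditioning first on $\D_o^i$, then $\D_{in}^i$, then $\B_k$ — and (b) confirming that the objective $V_1$ is bounded above (and below) so that the telescoped sum is finite; boundedness below/above of $J_i$ by $\bigO(R)$ follows from Assumption~\ref{assump_1} and the geometric sum $\sum_{h=0}^H \gamma^h \le 1/(1-\gamma)$, and $V_1$ inherits the same bound since it is an expectation of $J_i$ evaluated at a shifted point. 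The main obstacle, to the extent there is one, is bookkeeping the constants so that the final bound matches the stated form with the explicit $2G_V^2L_V\beta/(BD_o)$ noise term and the $\bigO(1)R$ in the numerator of the iteration count; all the structural heavy lifting (smoothness, boundedness of the stochastic gradient, unbiasedness) has already been done in Lemma~\ref{lemma_all_in_all} and Proposition~\ref{thm_der_V_1_parameters}.
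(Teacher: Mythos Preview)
Your proposal is correct and follows essentially the same approach as the paper: the same variance bound $\E[\|\tnabla V_1(\theta_k)-\nabla V_1(\theta_k)\|^2]\le G_V^2/(BD_o)$ (stated as a separate lemma in the paper, using exactly the independence-and-boundedness argument you sketch), the same smoothness ascent inequality, and the same boundedness of $V_1$ by $R$ from Assumption~\ref{assump_1}. The only cosmetic difference is that the paper extracts the iteration bound by contradiction (assume every iterate among the first $T$ violates the target bound, telescope, and contradict $|V_1(\theta_T)-V_1(\theta_0)|\le R$), whereas you telescope-and-average and read $\theta_\eps$ as a uniformly random iterate; both routes yield the same per-step inequality and the same $\min$ in the iteration count once you use the slack between your noise floor $G_V^2L_V\beta/(BD_o)$ and the stated $2G_V^2L_V\beta/(BD_o)$.
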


Next we characterize the complexity of SG-MRL for finding an $\eps$-first-order stationary point solution.

\begin{corollary}\label{main_cor}
Suppose the hypotheses of Theorem \ref{Thm_Main_Result_1} hold. Then, for any $\eps >0$, SG-MRL achieves $\eps$-first-order stationarity by setting: (i)
 $BD_o \geq 8 G_V^2/\eps^2$ and $\beta =1/L_V$ requiring $\bigO ({\eps^{-2}} )$ iterations and computing  $\bigO ({\eps^{-2}} )$ stochastic gradients per iteration; or
(ii) $\beta= \bigO ({\eps^{-2}} )$ {and $BD_o = \bigO(1)$} which requires $\bigO ({\eps^{-4}} )$ iterations and $\bigO (1)$ stochastic gradient evaluations per iteration.
\end{corollary}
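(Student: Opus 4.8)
The plan is to derive Corollary~\ref{main_cor} directly from Theorem~\ref{Thm_Main_Result_1} by choosing the free parameters $B$, $D_o$, and $\beta$ so that each of the two terms in the bound $\E[\|\nabla V_1(\theta_\eps)\|^2] \le \tfrac{2G_V^2 L_V \beta}{BD_o} + \eps^2$ is controlled at the scale $\eps^2$, and then reading off the iteration count from the $\bigO(1)\tfrac{R}{\beta}\min\{\eps^{-2}, BD_o/(G_V^2 L_V \beta)\}$ expression. Since the statement asked for is $\E[\|\nabla V_1(\theta_\eps)\|]\le\eps$, I would first note that by Jensen's inequality it suffices to guarantee $\E[\|\nabla V_1(\theta_\eps)\|^2]\le\eps^2$, so in both regimes I aim to make the right-hand side of the Theorem bound at most $\eps^2$ after a rescaling of $\eps$ by a constant factor (e.g.\ run the Theorem with target accuracy $\eps/\sqrt{2}$).

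For part~(i): set $\beta = 1/L_V$, which is admissible since the Theorem requires $\beta\in(0,1/L_V]$. Then the first term becomes $\tfrac{2G_V^2}{BD_o}$, so choosing $BD_o \ge 8G_V^2/\eps^2$ makes it at most $\eps^2/4 \le \eps^2/2$; combined with the $+\eps^2/2$ coming from running the Theorem at accuracy $\eps/\sqrt2$, the squared-gradient bound is at most $\eps^2$. For the iteration count, with $\beta=1/L_V$ the bound $\bigO(1)\tfrac{R}{\beta}\min\{\eps^{-2},\cdot\}\le \bigO(1)\,R L_V\,\eps^{-2} = \bigO(\eps^{-2})$. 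Since each iteration draws $B$ tasks and $D_o$ (plus $D_{in}$, a constant) trajectories per task, and $BD_o = \Theta(\eps^{-2})$, the number of stochastic gradient evaluations per iteration is $\bigO(\eps^{-2})$. This handles (i).

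For part~(ii): now keep $BD_o = \bigO(1)$ and instead drive the first term down by shrinking $\beta$. The first term is $\tfrac{2G_V^2 L_V \beta}{BD_o}$; to make it $\le \eps^2/2$ it suffices to take $\beta = \Theta(\eps^2)$ (absorbing $G_V, L_V, BD_o$ into the constant), which is also $\le 1/L_V$ for small $\eps$, hence admissible. Then the iteration bound is $\bigO(1)\tfrac{R}{\beta}\min\{\eps^{-2}, BD_o/(G_V^2 L_V \beta)\}$. With $\beta=\Theta(\eps^2)$ the first argument of the min is $\eps^{-2}$ and the second is $\Theta(\eps^{-2})$ as well (since $BD_o=\bigO(1)$ and $\beta=\Theta(\eps^2)$), so the product $\tfrac{R}{\beta}\cdot\eps^{-2} = \Theta(\eps^{-2}\cdot\eps^{-2}) = \bigO(\eps^{-4})$ iterations; each iteration uses $B D_o + B D_{in} = \bigO(1)$ stochastic gradients.

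I don't expect any real obstacle here: the Corollary is a bookkeeping consequence of the Theorem, and the only points requiring care are (a) checking the chosen $\beta$ lies in the admissible interval $(0,1/L_V]$ in each case, (b) making the $\eps/\sqrt2$ (or similar constant) rescaling explicit so that the two $\eps^2/2$ contributions add to $\eps^2$ and the final Jensen step yields $\E[\|\nabla V_1(\theta_\eps)\|]\le\eps$, and (c) tracking that ``stochastic gradients per iteration'' counts both the outer batch $\D_o^i$ and the inner batch $\D_{in}^i$ across the $B$ sampled tasks, so the count is $\Theta(B(D_o+D_{in}))$, which is $\bigO(\eps^{-2})$ in case (i) and $\bigO(1)$ in case (ii). If one wanted the multi-step ($\zeta>1$) version, the identical argument applies verbatim with $L_V, G_V$ replaced by their $\zeta$-dependent analogues from Appendix~\ref{app_smoothness}, since those enter only as constants.
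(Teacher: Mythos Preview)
Your proposal is correct and is precisely the parameter-plugging argument the paper intends; the paper does not spell out a proof of the corollary, treating it as immediate from Theorem~\ref{Thm_Main_Result_1}, and your derivation supplies exactly the routine verification (Jensen to pass from the squared-norm bound to $\E[\|\nabla V_1(\theta_\eps)\|]\le\eps$, then balancing the two terms by the stated choices of $BD_o$ and $\beta$). Your reading of case~(ii) as $\beta=\Theta(\eps^{2})$ rather than the literal $\bigO(\eps^{-2})$ printed in the statement is the correct one, since $\beta\le 1/L_V$ is required and the first error term $2G_V^2 L_V\beta/(BD_o)$ must be driven to $\bigO(\eps^2)$.
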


The conditions in Corollary \ref{main_cor} identify two settings under which SG-MRL finds an $\eps-$FOSP after a finite number of iterations, abd both settings overall require $\bigO ({\eps^{-4}} )$ stochastic gradient evaluations.

\begin{remark}
While we mainly focused on the case $\zeta=1$, we provide the general statement of the results for any $\zeta$ in the Appendix. Note that the downside of increasing $\zeta$ is that the smoothness parameter grows exponentially with respect to $\zeta$ (see Theorem \ref{thm_der_V_zeta_parameters}), which means that we need to take a smaller learning rate that leads to a slower convergence rate. However, on the positive side, by increasing $\zeta$ we train a model that better adapts to a new task. 
\end{remark}

\section{Numerical experiments}\label{sec:exp}

In this section, we empirically validate the proposed SG-MRL algorithm in larger-scale environments standard in modern reinforcement learning applications. The code is available online\footnote{The code is available at \url{https://github.com/kristian-georgiev/SGMRL}.}.

We conduct two experiments: a $2$D-navigation problem, and a more challenging locomotion problem simulated with the MuJoCo library~\citep{mujoco}.
For both experiments, we use a neural network policy with a standard feed-forward neural network and optimize it with vanilla policy gradient~\citep{williams1992simple}. Further implementation details are outlined in Appendix~\ref{appendix_experiments}. 

All experiments were conducted in MIT's Supercloud~\citep{reuther2018interactive}.
Similar to FO-MAML proposed in \cite{finn17a}, we use first order implementation of SG-MRL. It is also worth noting that SG-MRL is straightforward to implement as a modification to MAML and requires no additional hyperparameter tuning. Also, SG-MRL does not reduce the scalability of MAML. In particular, across experiments, we benchmarked the clock time of SG-MRL against MAML and SG-MRL is consistently at most $1.05$ times slower over the course of training. 
Next, we demonstrate the practicality of SG-MRL in modern deep reinforcement learning problems.

\begin{wrapfigure}{r}{0.4\textwidth}
\centering
 \includegraphics[scale=0.14]{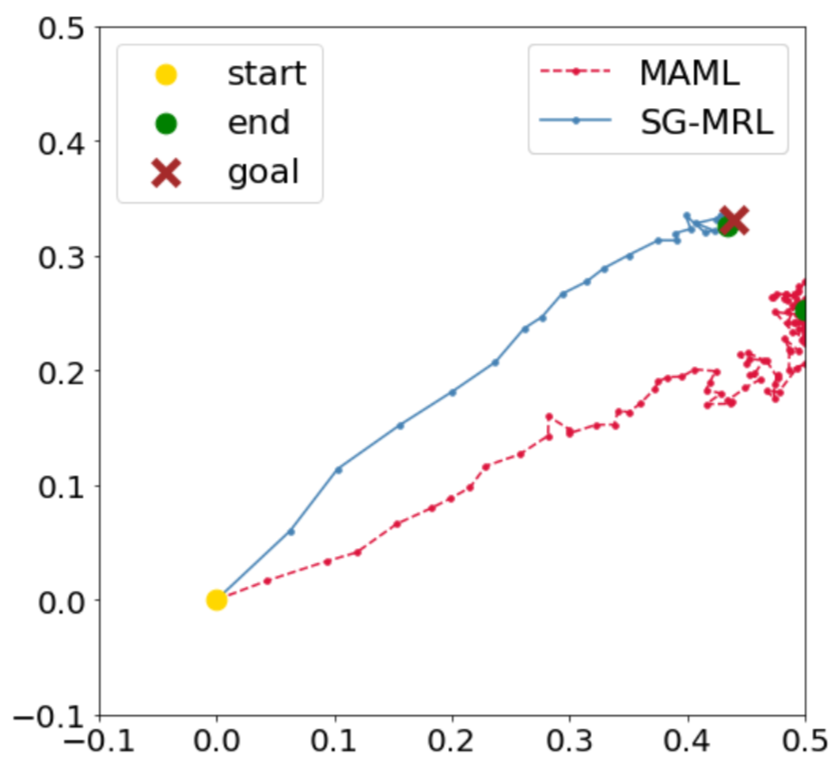}
\caption{Trajectories generated by policies trained with SG-MRL and MAML for the $2$D-navigation problem.}
\label{fig:2d_nav}
\end{wrapfigure}
\textbf{2D-navigation.} 
We consider the problem of a point-mass agent navigating from the origin to a random goal location within a unit-size square centered at the origin ($[-0.5, 0.5]\times [-0.5,0.5]$). We consider the negative squared distance to the goal location as a reward. Observations consist of the position of the agent within the unit-size square. The action space comprises of all velocities with components clipped in the interval $[-0.1,0.1]$. An example of a trajectory is illustrated in Figure~\ref{fig:2d_nav}. In Table~\ref{table:2dnav}, we compare the performance of SG-MRL against MAML \citep{finn17a} and E-MAML~\citep{emaml_stadie2018some}. We make a comparison with E-MAML since it has a similar spirit to our proposed SG-MRL method, but unlike the proposed algorithm, E-MAML is derived from heuristic arguments.

\textbf{Locomotion: MuJoCo environments.} In addition to the $2$D-navigation example, we provide a benchmark on a more challenging set of tasks - MuJoCo's locomotion environments. We benchmark our algorithm against MAML on three different tasks and report the results in Table~\ref{tab:experiments}. The tasks involve learning to move in a goal direction (forward/backward), or reach a target velocity. We describe each task in more detail in Appendix~\ref{appendix_experiments}.




\begin{table}[t]
    \centering
    \caption{Mean meta-test reward (negative square distance to goal location) of SG-MRL, MAML, and E-MAML after $1$ adaptation step.}
    \vspace{-1mm}
    \begin{tabular}{c c}
        \hline
        Algorithm & Meta-Test Reward \\
        \hline
\textbf{SG-MRL}  & $\pmb{-16.901 \pm 0.699}$\\
       MAML  & $-17.767 \pm 0.106$ \\ 
       E-MAML  & $-17.803 \pm 0.115$ \\ 
        \hline
    \end{tabular}
    \label{table:2dnav}
    \vspace{-2mm}
\end{table}



\begin{table}[t!]
    \centering
      \caption{The mean meta-test reward for SG-MRL and MAML on additional environments when trained and adapted with $1$, $2$, and $3$ inner updates over $4$ random seeds.}
      \vspace{-1mm}
    \begin{tabular}{lcc}
    \hline
      environment & SG-MRL reward & MAML reward  \\
    \hline
      Half-Cheetah Random Direction, 1 step  & $\bm{580.143 \pm 38.22}$ & $465.624 \pm 54.07$ \\
      Half-Cheetah Random Direction, 2 step  & $\bm{580.203 \pm 33.63}$ & $441.247 \pm 58.34$ \\
      Half-Cheetah Random Direction, 3 step  & $\bm{504.747 \pm 45.07}$ & $477.086 \pm 64.71$ \\
      Half-Cheetah Random Velocity, 1 step  & $\bm{-91.73 \pm 0.34}$ & $-92.92 \pm 0.70$ \\
      Half-Cheetah Random Velocity, 2 step  & $\bm{-52.64\pm 6.86}$ & $-56.71\pm 6.73$ \\
      Half-Cheetah Random Velocity, 3 step  & $-33.39\pm 0.67$ & $\bm{-32.48\pm 0.50}$ \\
      Swimmer Random Velocity, 1 step  & $\bm{118.77\pm 9.99}$ & $104.53\pm 24.18$ \\
      Swimmer Random Velocity, 2 step  & $\bm{134.57\pm 1.67}$ & $108.47\pm 23.36$ \\
      Swimmer Random Velocity, 3 step  & $\bm{110.91\pm 12.56}$ & $90.60\pm 14.99$ \\
    \hline
    \end{tabular}
    \label{tab:experiments}
\end{table}

\section{Conclusion and future work}\label{sec:conclusion}

We studied MAML for RL problems, considering performing a few steps of stochastic policy gradient at test time. Given this formulation, we introduced SG-MRL, and discussed how it differs from the original MAML algorithm in \cite{finn17a}.  Further, we characterized the convergence of SG-MRL method in terms of gradient norm and under a set of assumptions on the policy and reward functions. Our results show that, for any $\epsilon$, SG-MRL achieves $\epsilon$-first-order stationarity, given that either the learning rate is small enough or the multiplication of task and outer loop batch sizes is sufficiently large.

A shortcoming of our analysis is the requirement on the boundedness of  gradient norm (Assumption~\ref{assump_2}). A natural extension of our work would be extending the theoretical results to the setting that gradient norm is possibly unbounded. Moreover, our results are limited to achieving first-order optimality, while one can exploit techniques for escaping from saddle points to obtain second-order stationarity.

\section{Acknowledgment}
Alireza Fallah acknowledges support from the Apple Scholars in AI/ML PhD fellowship and the MathWorks Engineering Fellowship.
This research is sponsored by the United States Air Force Research Laboratory and the United States Air Force Artificial Intelligence Accelerator and was accomplished under Cooperative Agreement Number FA8750-19-2-1000. The views and conclusions contained in this document are those of the authors and should not be interpreted as representing the official policies, either expressed or implied, of the United States Air Force or the U.S. Government. The U.S. Government is authorized to reproduce and distribute reprints for Government purposes notwithstanding any copyright notation herein.  
This research of Aryan Mokhtari is supported in part by NSF Grant 2007668, ARO Grant W911NF2110226, the Machine Learning Laboratory at UT Austin, and the NSF AI Institute for Foundations of Machine Learning.

\bibliographystyle{ieeetr}
\bibliography{main}

\begin{thebibliography}{10}

\bibitem{finn17a}
C.~Finn, P.~Abbeel, and S.~Levine, ``Model-agnostic meta-learning for fast
  adaptation of deep networks,'' in {\em Proceedings of the 34th International
  Conference on Machine Learning}, (Sydney, Australia), 06--11 Aug 2017.

\bibitem{duan2016rl}
Y.~Duan, J.~Schulman, X.~Chen, P.~L. Bartlett, I.~Sutskever, and P.~Abbeel,
  ``Rl2: Fast reinforcement learning via slow reinforcement learning,'' {\em
  arXiv preprint arXiv:1611.02779}, 2016.

\bibitem{wang2016learning}
J.~X. Wang, Z.~Kurth-Nelson, D.~Tirumala, H.~Soyer, J.~Z. Leibo, R.~Munos,
  C.~Blundell, D.~Kumaran, and M.~Botvinick, ``Learning to reinforcement
  learn,'' {\em arXiv preprint arXiv:1611.05763}, 2016.

\bibitem{mishra2017simple}
N.~Mishra, M.~Rohaninejad, X.~Chen, and P.~Abbeel, ``A simple neural attentive
  meta-learner,'' {\em arXiv preprint arXiv:1707.03141}, 2017.

\bibitem{rothfuss2018promp}
J.~Rothfuss, D.~Lee, I.~Clavera, T.~Asfour, and P.~Abbeel, ``Promp: Proximal
  meta-policy search,'' {\em arXiv preprint arXiv:1810.06784}, 2018.

\bibitem{wang2018prefrontal}
J.~X. Wang, Z.~Kurth-Nelson, D.~Kumaran, D.~Tirumala, H.~Soyer, J.~Z. Leibo,
  D.~Hassabis, and M.~Botvinick, ``Prefrontal cortex as a meta-reinforcement
  learning system,'' {\em Nature neuroscience}, vol.~21, no.~6, pp.~860--868,
  2018.

\bibitem{nagabandi2018learning}
A.~Nagabandi, I.~Clavera, S.~Liu, R.~S. Fearing, P.~Abbeel, S.~Levine, and
  C.~Finn, ``Learning to adapt in dynamic, real-world environments through
  meta-reinforcement learning,'' {\em arXiv preprint arXiv:1803.11347}, 2018.

\bibitem{rakelly2019efficient}
K.~Rakelly, A.~Zhou, D.~Quillen, C.~Finn, and S.~Levine, ``Efficient off-policy
  meta-reinforcement learning via probabilistic context variables,'' {\em arXiv
  preprint arXiv:1903.08254}, 2019.

\bibitem{yu2019meta}
T.~Yu, D.~Quillen, Z.~He, R.~Julian, K.~Hausman, C.~Finn, and S.~Levine,
  ``Meta-world: A benchmark and evaluation for multi-task and meta
  reinforcement learning,'' {\em arXiv preprint arXiv:1910.10897}, 2019.

\bibitem{liu2019taming}
H.~Liu, R.~Socher, and C.~Xiong, ``Taming maml: Efficient unbiased
  meta-reinforcement learning,'' in {\em International Conference on Machine
  Learning}, pp.~4061--4071, 2019.

\bibitem{mendonca2019guided}
R.~Mendonca, A.~Gupta, R.~Kralev, P.~Abbeel, S.~Levine, and C.~Finn, ``Guided
  meta-policy search,'' in {\em Advances in Neural Information Processing
  Systems}, pp.~9653--9664, 2019.

\bibitem{gupta2018meta}
A.~Gupta, R.~Mendonca, Y.~Liu, P.~Abbeel, and S.~Levine, ``Meta-reinforcement
  learning of structured exploration strategies,'' in {\em Advances in Neural
  Information Processing Systems}, pp.~5302--5311, 2018.

\bibitem{rajeswaran2019meta}
A.~Rajeswaran, C.~Finn, S.~M. Kakade, and S.~Levine, ``Meta-learning with
  implicit gradients,'' in {\em Advances in Neural Information Processing
  Systems}, pp.~113--124, 2019.

\bibitem{fallah2019convergence}
A.~Fallah, A.~Mokhtari, and A.~Ozdaglar, ``On the convergence theory of
  gradient-based model-agnostic meta-learning algorithms,'' in {\em
  International Conference on Artificial Intelligence and Statistics},
  pp.~1082--1092, PMLR, 2020.

\bibitem{ji2020multi}
K.~Ji, J.~Yang, and Y.~Liang, ``Multi-step model-agnostic meta-learning:
  Convergence and improved algorithms,'' {\em arXiv preprint arXiv:2002.07836},
  2020.

\bibitem{finn19a}
C.~Finn, A.~Rajeswaran, S.~Kakade, and S.~Levine, ``Online meta-learning,'' in
  {\em Proceedings of the 36th International Conference on Machine Learning},
  vol.~97 of {\em Proceedings of Machine Learning Research}, (Long Beach,
  California, USA), pp.~1920--1930, PMLR, 09--15 Jun 2019.

\bibitem{balcan_ICML}
M.~Khodak, M.-F. Balcan, and A.~Talwalkar, ``Provable guarantees for
  gradient-based meta-learning,'' in {\em Proceedings of the 36th International
  Conference on Machine Learning}, vol.~97 of {\em Proceedings of Machine
  Learning Research}, (Long Beach, California, USA), PMLR, 09--15 Jun 2019.

\bibitem{khodak2019adaptive}
M.~Khodak, M.-F.~F. Balcan, and A.~S. Talwalkar, ``Adaptive gradient-based
  meta-learning methods,'' in {\em Advances in Neural Information Processing
  Systems}, pp.~5915--5926, 2019.

\bibitem{foerster2018dice}
J.~Foerster, G.~Farquhar, M.~Al-Shedivat, T.~Rockt{\"a}schel, E.~P. Xing, and
  S.~Whiteson, ``Dice: The infinitely differentiable monte-carlo estimator,''
  {\em arXiv preprint arXiv:1802.05098}, 2018.

\bibitem{sutton2018reinforcement}
R.~S. Sutton and A.~G. Barto, {\em Reinforcement learning: An introduction}.
\newblock MIT press, 2018.

\bibitem{peters2008reinforcement}
J.~Peters and S.~Schaal, ``Reinforcement learning of motor skills with policy
  gradients,'' {\em Neural networks}, vol.~21, no.~4, pp.~682--697, 2008.

\bibitem{shen2019hessian}
Z.~Shen, A.~Ribeiro, H.~Hassani, H.~Qian, and C.~Mi, ``Hessian aided policy
  gradient,'' in {\em International Conference on Machine Learning},
  pp.~5729--5738, 2019.

\bibitem{hu2020biased}
Y.~Hu, S.~Zhang, X.~Chen, and N.~He, ``Biased stochastic first-order methods
  for conditional stochastic optimization and applications in meta learning,''
  {\em Advances in Neural Information Processing Systems}, vol.~33, 2020.

\bibitem{pmlr-v80-papini18a}
M.~Papini, D.~Binaghi, G.~Canonaco, M.~Pirotta, and M.~Restelli, ``Stochastic
  variance-reduced policy gradient,'' in {\em Proceedings of the 35th
  International Conference on Machine Learning}, vol.~80 of {\em Proceedings of
  Machine Learning Research}, (StockholmsmÃ¤ssan, Stockholm Sweden),
  pp.~4026--4035, PMLR, 10--15 Jul 2018.

\bibitem{agarwal2019optimality}
A.~Agarwal, S.~M. Kakade, J.~D. Lee, and G.~Mahajan, ``Optimality and
  approximation with policy gradient methods in markov decision processes,''
  {\em arXiv preprint arXiv:1908.00261}, 2019.

\bibitem{mujoco}
E.~{Todorov}, T.~{Erez}, and Y.~{Tassa}, ``Mujoco: A physics engine for
  model-based control,'' in {\em 2012 IEEE/RSJ International Conference on
  Intelligent Robots and Systems}, pp.~5026--5033, 2012.

\bibitem{williams1992simple}
R.~J. Williams, ``Simple statistical gradient-following algorithms for
  connectionist reinforcement learning,'' {\em Machine learning}, vol.~8,
  no.~3-4, pp.~229--256, 1992.

\bibitem{reuther2018interactive}
A.~Reuther, J.~Kepner, C.~Byun, S.~Samsi, W.~Arcand, D.~Bestor, B.~Bergeron,
  V.~Gadepally, M.~Houle, M.~Hubbell, {\em et~al.}, ``Interactive
  supercomputing on 40,000 cores for machine learning and data analysis,'' in
  {\em 2018 IEEE High Performance extreme Computing Conference (HPEC)},
  pp.~1--6, IEEE, 2018.

\bibitem{emaml_stadie2018some}
B.~Stadie, G.~Yang, R.~Houthooft, P.~Chen, Y.~Duan, Y.~Wu, P.~Abbeel, and
  I.~Sutskever, ``The importance of sampling inmeta-reinforcement learning,''
  in {\em Advances in Neural Information Processing Systems} (S.~Bengio,
  H.~Wallach, H.~Larochelle, K.~Grauman, N.~Cesa-Bianchi, and R.~Garnett,
  eds.), vol.~31, pp.~9280--9290, Curran Associates, Inc., 2018.

\bibitem{nesterov_convex}
Y.~Nesterov, {\em Introductory Lectures on Convex Optimization: A Basic
  Course}, vol.~87.
\newblock Springer, 2004.

\end{thebibliography}

\newpage 
\appendix

\section{Intermediate Results}
\subsection{A Remark on the Batch of Trajectories}\label{remark_batch}
Recall that $q_i(\D^{i};\theta)$ denotes the probability of independently drawing batch $\D^{i}$ of trajectories with respect to $i$-th MDP and at policy parameter $\theta$. Also, as we stated in Section \ref{sec:problem}, we assume the batch of trajectories are sampled with replacement. Note that, in this case
\begin{equation}\label{prob_batch_traj}
q_i(\D^{i};\theta) = \prod_{\tau \in 	\D^{i,\theta}} q_i(\tau; \theta).
\end{equation}
However, for the case that the batch of trajectories that we draw is not ordered, we have
\begin{equation}\label{prob_batch_traj_nonordered}
q_i(\D^{i};\theta) = C_{\D^{i}} \prod_{\tau \in 	\D^{i,\theta}} q_i(\tau; \theta).
\end{equation}
with
\begin{equation*}
C_{\D^{i}} = {|\D^i|!}/{\prod_{\tau \in (\MS_i \times \A_i)^{H+1}} C_\tau !}	
\end{equation*}
where $C_\tau$ is the number of times that the particular trajectory $\tau$ is appeared in $\D^i$. Throughout the proofs, we mainly refer to \eqref{prob_batch_traj}. However, the results can be easily extended to \eqref{prob_batch_traj_nonordered} as well. The reason is that we mostly work with the term $\nabla_\theta \log q_i(\D^{i};\theta)$, and since $C_{\D^{i}}$ is not a function of $\theta$, for both cases we have
\begin{equation*}
\nabla_\theta \log q_i(\D^{i};\theta) = \sum_{\tau \in \D^i} \nabla_\theta \log q_i(\tau;\theta) = 	\sum_{\tau \in \D^i} \nabla_\theta \log \pi_i(\tau;\theta)
\end{equation*}
where the last equality is obtained using \eqref{traj_prob} along with the definition \eqref{pi_tau}.

\subsection{Lemmas}
\begin{lemma}\label{total_smoothness}
For any $i \in \{1,...,n\}$, let $f_i: \R^d \to W_i$ be a continuous function with $W_i \in \{\R, \R^d, \R^{1 \times d}, \R^{d \times d}\}$ such that $g(\theta)= f_n(\theta) ... f_1(\theta)$ is well defined. Furthermore, assume that for any $i$, the following holds:
\begin{enumerate}
\item $f_i$ is bounded, i.e., $\|f_i(\theta)\| \leq B_i$ for some nonnegative constant $B_i$ and any $\theta \in \R^d$.
\item $f_i$ is Lipschitz, i.e., $\|f_i(\theta) - f_i(\ttheta)\| \leq L_i \| \theta - \ttheta\|$ for some nonnegative constant $L_i$ and any $\theta, \ttheta \in \R^d$.	
\end{enumerate}
 Then, $g(\theta)$ is Lipschitz with parameter $L_g := \sum_{i=1}^n (L_i\prod_{j \neq i} B_j)$, i.e., for any $\theta$ and $\ttheta$, 
\begin{equation}
\| g(\theta) - g(\ttheta) \| \leq 	L_g \|\theta - \ttheta \|.
\end{equation}
\end{lemma}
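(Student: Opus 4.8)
The plan is to prove the claim by induction on $n$, the number of factors, using the algebraic identity that relates a difference of products to a telescoping sum of single-factor differences. For the base case $n=1$, the statement is just the Lipschitz hypothesis on $f_1$. For the inductive step, I would write $g(\theta) = f_n(\theta)\, h(\theta)$, where $h(\theta) := f_{n-1}(\theta)\cdots f_1(\theta)$, and observe that by the inductive hypothesis $h$ is Lipschitz with constant $L_h := \sum_{i=1}^{n-1}(L_i \prod_{1\le j\le n-1,\, j\neq i} B_j)$ and bounded by $B_h := \prod_{j=1}^{n-1} B_j$ (boundedness of a product is immediate from boundedness of the factors).

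First I would establish the product rule for Lipschitz maps: if $F$ is bounded by $B_F$ and Lipschitz with constant $L_F$, and $G$ is bounded by $B_G$ and Lipschitz with constant $L_G$, then $FG$ is Lipschitz with constant $B_F L_G + L_F B_G$. This follows from the standard add-and-subtract trick,
\begin{equation*}
\| F(\theta)G(\theta) - F(\ttheta)G(\ttheta) \| \le \| F(\theta) \|\,\| G(\theta) - G(\ttheta)\| + \|F(\theta) - F(\ttheta)\|\,\|G(\ttheta)\|,
\end{equation*}
together with submultiplicativity of the relevant norms for the matrix/vector/scalar product shapes allowed by $W_i \in \{\R, \R^d, \R^{1\times d}, \R^{d\times d}\}$ (one should check each admissible shape combination makes $\|XY\|\le \|X\|\|Y\|$ hold, which it does for the spectral/Euclidean norms in all the listed cases). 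Applying this with $F = f_n$, $G = h$ gives that $g$ is Lipschitz with constant $B_n L_h + L_n B_h$. Substituting the expressions for $L_h$ and $B_h$ yields
\begin{equation*}
B_n \sum_{i=1}^{n-1}\Big(L_i \prod_{\substack{1\le j \le n-1\\ j\neq i}} B_j\Big) + L_n \prod_{j=1}^{n-1} B_j = \sum_{i=1}^{n-1}\Big(L_i \prod_{\substack{1\le j \le n\\ j\neq i}} B_j\Big) + L_n \prod_{\substack{1\le j\le n\\ j\neq n}} B_j = \sum_{i=1}^{n}\Big(L_i \prod_{j\neq i} B_j\Big),
\end{equation*}
which is exactly $L_g$, completing the induction.

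The only real subtlety — and the step I would be most careful about — is the submultiplicativity bookkeeping across the mixed-shape products: the chain $g = f_n \cdots f_1$ only makes sense when consecutive dimensions match, and one must confirm that in every case appearing in the paper's applications (vector times matrix, row-vector times matrix, scalar times vector, etc.) the operator/Euclidean norm satisfies $\|XY\|\le\|X\|\|Y\|$ so that both the boundedness propagation $\|f_n h\|\le B_n B_h$ and the product-rule estimate go through. Once that is in place, everything else is the routine telescoping argument above, and no further input beyond the two stated hypotheses is needed.
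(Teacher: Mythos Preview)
Your proposal is correct and follows essentially the same route as the paper: induction on $n$, the add-and-subtract identity to get the two-factor product rule $B_F L_G + L_F B_G$, and then applying that rule to $f_n\cdot(f_{n-1}\cdots f_1)$ with the inductive hypothesis to recover $L_g=\sum_i L_i\prod_{j\neq i}B_j$. The only cosmetic difference is that the paper takes $n=2$ as the base case, whereas you start at $n=1$ and isolate the two-factor estimate as a standalone ``product rule''; your more explicit remark about norm submultiplicativity across the allowed shapes is a point the paper leaves implicit.
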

\begin{proof}
We prove this result by induction on $n$. First, for $n=2$, note that
\begin{align}
\| g(\theta) - g(\ttheta) \| &= \left \| f_2(\theta) f_1(\theta) -  f_2(\ttheta) f_1(\ttheta) \right \|	\nonumber \\
& = \left \| f_2(\theta) f_1(\theta) -  f_2(\theta) f_1(\ttheta) + f_2(\theta) f_1(\ttheta) -  f_2(\ttheta) f_1(\ttheta) \right \| \nonumber \\
& \leq \left \| f_2(\theta) f_1(\theta) -  f_2(\theta) f_1(\ttheta) \right \| + \left \| f_2(\theta) f_1(\ttheta) -  f_2(\ttheta) f_1(\ttheta) \right \| \nonumber  \\
& \leq \| f_2(\theta) \| \| f_1(\theta) -  f_1(\ttheta)\| + \| f_1(\ttheta) \| \| f_2(\theta) - f_2(\ttheta)\| \nonumber \\
& \leq B_2 L_1 \|\theta - \ttheta \| + B_1 L_2 \|\theta - \ttheta \| = L_g \|\theta - \ttheta \| \label{n_2_smooth}
\end{align}
where the last inequality follows from the boundedness and Lipschitz property assumptions on $f_i$. 
Next, for $n \geq 3$, we assume the results holds for $n-1$, and we show it also holds for $n$. Note that if $f_{n}(\theta) ... f_1(\theta)$ is well defined, $f_m(\theta) ... f_1(\theta)$ is also well defined for any $m \leq n$, including $m=n-1$. Hence, by induction hypothesis
\begin{equation}\label{induction_hyp}
\| f_{n-1}(\theta) ... f_1(\theta) - f_{n-1}(\ttheta) ... f_1(\ttheta) \| \leq 	\tilde{L}_g \|\theta - \ttheta \|.
\end{equation}
where $\tilde{L}_g = \sum_{i=1}^{n-1} (L_i\prod_{j \neq i} B_j)$. Thus, $\tilde{g}(\theta) := f_{n-1}(\theta) ... f_1(\theta)$ is Lipschitz with parameter $\tilde{L}_g$. Also, it is bounded by $\prod_{j=1}^{n-1} B_j$. Finally, note that $\tilde{g}$ is a function from $\R^d$ to one of $\{\R, \R^d, \R^{1 \times d}, \R^{d \times d}\}$. Thus, using \eqref{n_2_smooth}, we obtain
\begin{align}
\| g(\theta) - g(\ttheta) \| = \left \| f_n(\theta) \tilde{g}(\theta) -  f_n(\ttheta) \tilde{g}(\ttheta) \right \| & \leq 	(B_n \tilde{L}_g + L_n \prod_{j=1}^{n-1} B_j) \|\theta - \ttheta \|.
\end{align}
However, it is easy to verify that in fact $B_n \tilde{L}_g + L_n \prod_{j=1}^{n-1} B_j = L_g$ and hence the proof is complete.
\end{proof}
\begin{lemma}\label{lemma_smoothness_expect}
For any $i \in \{1,...,n\}$, let $f_i: \R^d \to \R^m$ be a continuously differentiable function which is bounded by $B_f$, and is also Lipschitz with Lipschitz parameter $L_f$. Also, let $p(.;\theta)$ be a distribution on $\{f_i\}_{i=1}^n$ where probability of drawing $f_i$ is $p(i;\theta)$. We further assume there exists a non-negative constant $B_p$ such that for any $i$ and $\theta$
\begin{equation}
\|\nabla_\theta \log p(i;\theta)\| \leq B_p.	
\end{equation}
Then, the function $g(\theta) := \E_{p(i;\theta)} [f(i;\theta)] $ is Lipschitz with parameter $B_f B_p + L_f$.
\end{lemma}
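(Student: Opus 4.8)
The plan is to write $g(\theta_1) - g(\theta_2)$ in a way that splits the variation into two parts: one coming from the change in the functions $f(i;\cdot)$ and one coming from the change in the sampling distribution $p(\cdot;\theta)$. Concretely, I would start from
$
g(\theta_1) - g(\theta_2) = \sum_{i=1}^n p(i;\theta_1) f(i;\theta_1) - \sum_{i=1}^n p(i;\theta_2) f(i;\theta_2),
$
and insert the mixed term $\sum_i p(i;\theta_1) f(i;\theta_2)$ to get
$
g(\theta_1) - g(\theta_2) = \sum_{i=1}^n p(i;\theta_1)\bigl(f(i;\theta_1) - f(i;\theta_2)\bigr) + \sum_{i=1}^n \bigl(p(i;\theta_1) - p(i;\theta_2)\bigr) f(i;\theta_2).
$
The first sum is bounded in norm by $L_f \|\theta_1 - \theta_2\|$ using the Lipschitz property of each $f_i$ together with $\sum_i p(i;\theta_1) = 1$. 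The second sum is where the bound $B_p$ on $\|\nabla_\theta \log p(i;\theta)\|$ enters.

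For the second sum, the key identity is $\nabla_\theta p(i;\theta) = p(i;\theta)\, \nabla_\theta \log p(i;\theta)$, so that $\|\nabla_\theta p(i;\theta)\| \le B_p\, p(i;\theta)$. By the mean value theorem (or the fundamental theorem of calculus along the segment joining $\theta_1$ and $\theta_2$), $|p(i;\theta_1) - p(i;\theta_2)| \le \sup_{\theta} \|\nabla_\theta p(i;\theta)\|\cdot \|\theta_1 - \theta_2\|$; more carefully, one integrates $\|\nabla_\theta p(i;\theta(t))\| \le B_p\, p(i;\theta(t))$ along the segment. Then bounding $\|f(i;\theta_2)\| \le B_f$ and summing over $i$, the total is at most $B_f B_p \|\theta_1 - \theta_2\| \cdot \sup_t \sum_i p(i;\theta(t)) = B_f B_p \|\theta_1 - \theta_2\|$, since the probabilities sum to one at every point. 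Combining the two estimates gives the Lipschitz constant $B_f B_p + L_f$.

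The one technical point to handle carefully is the interchange of summation and differentiation / the application of the mean value theorem to the finite sum: since $n$ is finite and each $p(i;\cdot)$ is continuously differentiable (it is implicitly assumed smooth enough for $\nabla_\theta \log p(i;\theta)$ to be defined and bounded), there is no real obstacle here, but one should phrase it via integration along the line segment $\theta(t) = (1-t)\theta_2 + t\theta_1$ rather than a naive supremum, so that the factor $\sum_i p(i;\theta(t)) = 1$ can be pulled out cleanly inside the integral. I expect this bookkeeping step — keeping the probabilities normalized at the intermediate point while bounding the integrand — to be the only place requiring a moment's care; the rest is the standard add-and-subtract argument analogous to the proof of Lemma~\ref{total_smoothness}.
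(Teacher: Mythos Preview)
Your argument is correct and follows essentially the same route as the paper: both decompose the variation into a contribution from the change in $f(i;\cdot)$ and one from the change in $p(\cdot;\theta)$, and both rely on the identity $\nabla_\theta p(i;\theta) = p(i;\theta)\,\nabla_\theta \log p(i;\theta)$ together with $\sum_i p(i;\theta)=1$. The only cosmetic difference is that the paper bounds the Jacobian $\frac{\partial}{\partial\theta} g(\theta)$ directly at a single point via the product rule, which sidesteps the line-segment integration you describe; the underlying computation is identical.
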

\begin{proof}
First note that	
\begin{equation}\label{int_2_1}
\| \nabla_\theta p(i;\theta) \| 	= \|\nabla_\theta \log p(i;\theta)\|  p(i;\theta) \leq B_p ~ p(i;\theta).
\end{equation}
To show the result, it suffices to prove
\begin{equation}
\| \frac{\partial}{\partial \theta}	 g(\theta) \| \leq B_f B_p + L_f. 
\end{equation}
To show this, note that, by product rule, we have
\begin{align}
\frac{\partial}{\partial \theta}	 g(\theta) = \frac{\partial}{\partial \theta} (\sum_{i} f(i;\theta)p(i;\theta) ) = 	\sum_{i} p(i;\theta) \frac{\partial}{\partial \theta} f(i;\theta) + \sum_{i} \nabla p(i;\theta) f(i;\theta)^\top. 
\end{align}
As a result
\begin{align}
\| \frac{\partial}{\partial \theta}	 g(\theta)\| & \leq \sum_{i} p(i;\theta) \| \frac{\partial}{\partial \theta} f(i;\theta)\| + \sum_{i} \|\nabla p(i;\theta)\| \|f(i;\theta)\| \nonumber \\
& \leq L_f \sum_{i} p(i;\theta) + B_f B_p \sum_{i} p(i;\theta) \label{int_2_2} \\
& = L_f + B_f B_p \nonumber
\end{align}
where first part of \eqref{int_2_2} follows from the fact that $\| \frac{\partial}{\partial \theta} f(i;\theta)\| \leq L_f$ as $f(i;\theta)$ is Lipschitz with parameter $L_f$, and the second part of \eqref{int_2_2} is obtained using \eqref{int_2_1} along with boundedness assumption of $f_i$ functions.
\end{proof}
\section{Softmax Policy}\label{sec:softmax_example}
Consider the function $\phi:\A \times \MS \to \R^d$ as an arbitrary mapping from the space of actions-states to real-valued vectors with dimension $d$ which is the size of policy parameter $\theta$. Then, the softmax policy is given by\footnote{Through this example we suppress the task indices and mostly focus on softmax parametrization.}	
\begin{equation*}
\pi(a|s,\theta) = \frac{\exp (\phi(a,s)^\top \theta)}{\sum_{a'\in \A}\exp (\phi(a',s)^\top \theta)}.	
\end{equation*}
In this case, $\nabla_\theta \log 	\pi(a|s;\theta)$, which is known as the score function, admits the following characterization (see \citep{sutton2018reinforcement})
\begin{equation}\label{softmax_derivative}
\nabla_\theta \log \pi(a|s;\theta) = \phi(a,s) - \E_{a' \sim \pi(a'|s,\theta)}[\phi(a',s)].	
\end{equation}
Using this expression, we can show that the Hessian $\nabla_\theta^2 \log 	\pi(a|s;\theta)$ is equal to the negative of covariance matrix of random variable $\phi(a',s)$ when $a'$ is drawn from distribution $\pi(a'|s,\theta)$, i.e.,
\begin{align*}
& \nabla_\theta^2 \log \pi(a|s;\theta) \\ 
& = - \E_{a' \sim \pi(a'|s,\theta)} \left [ \left (\phi(a',s) - \E_{a'' \sim \pi(a''|s,\theta)}[\phi(a'',s)] \right )  \right. \\
& \left. \quad \quad \quad \quad \left (\phi(a',s) - \E_{a'' \sim \pi(a''|s,\theta)}[\phi(a'',s)] \right )^\top \right ].
\end{align*}
For more details regarding the derivation of $\nabla_\theta^2 \log \pi(a|s;\theta)$ please check Appendix \ref{app:Hessian_softmax}. 

According to the expressions for $\nabla_\theta \log \pi(a|s;\theta)$ and $\nabla_\theta^2 \log \pi(a|s;\theta)$, when we use a softmax policy, if we assume that the mapping norm $\|\phi(.,.)\|$ is bounded, then both conditions in Assumption \ref{assump_2} hold, i.e., $\| \nabla_\theta \log 	\pi(a|s;\theta) \|$ and $\| \nabla_\theta^2 \log 	\pi(a|s;\theta) \|$ would be both bounded for any action $a$, state $s$, and parameter $\theta$. Moreover, in Appendix \ref{app:Hessian_softmax}, we further show that the boundedness of $\|\phi(.,.)\|$ implies that the condition in Assumption \ref{assump_3} holds as well. 

Hence, at least for the softmax policy, the conditions in Assumptions~\ref{assump_2} and \ref{assump_3} hold, if the mapping $\phi$ has a bounded norm. {Note that in most applications, the mapping $\phi$ is a neural network and as the weights of neural networks are often bounded (or enforced to be bounded), $\|\phi(.,.)\|$ is uniformly upper bounded.}
\section{Multi-Step SG-MRL Method}\label{der_V_zeta}
We first start by characterizing $\nabla V_\zeta(\theta)$ for general $\zeta \geq 1$.
\begin{theorem}
Recall the definition of $V_\zeta(\theta)$ \eqref{Meta-RL_prob_zeta}. Then, its derivative can be expressed as
\begin{align}
& \nabla V_\zeta(\theta) = 	\E_{i \sim p} \E_{\{\D_{test,j}^{i}\}_{t=1}^{\zeta}} \left [ \prod_{t=1}^\zeta (I+\alpha \tnabla^2 J_i(\theta^{i,t-1}(\theta), \D_{test,t'}^i)) \nabla J_i(\theta^{i,\zeta}(\theta)) \right . \nonumber \\
& \left . + J_i \left ( \theta^{i,\zeta}(\theta) \right ) \sum_{t=1}^\zeta \left ( \prod_{t'=1}^{t-1} (I+\alpha \tnabla^2 J_i(\theta^{i,t'-1}(\theta), \D_{test,t'}^i)) \sum_{\tau \in \D_{test, t}^{i}} \nabla_\theta \log 	\pi_i(\tau;\theta^{i,t-1}(\theta)) \right ) \right ].
\end{align}	
\end{theorem}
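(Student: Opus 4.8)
The plan is to compute $\nabla V_\zeta(\theta)$ directly by differentiating through the nested composition using the chain rule and the policy-gradient/likelihood-ratio trick, keeping careful track of two sources of $\theta$-dependence: (i) the explicit $\theta$ inside the inner-update maps $\Psi_i$, and (ii) the implicit dependence of the sampling distributions $q_i(\cdot;\theta^{i,t-1}(\theta))$ on $\theta$ through the batches $\D_{test,t}^i$. To this end I would first introduce notation for the iterates produced by the inner loop: set $\theta^{i,0}(\theta):=\theta$ and $\theta^{i,t}(\theta):=\Psi_i(\theta^{i,t-1}(\theta),\D_{test,t}^i)=\theta^{i,t-1}(\theta)+\alpha\tnabla J_i(\theta^{i,t-1}(\theta),\D_{test,t}^i)$, so that $V_\zeta(\theta)=\E_{i}\E_{\{\D^i_{test,t}\}}[J_i(\theta^{i,\zeta}(\theta))]$.

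The first step is to move the gradient inside the expectation over tasks (a finite sum, so this is immediate) and then handle the expectation over the trajectory batches. Writing the inner expectation as an integral/sum against $\prod_{t=1}^\zeta q_i(\D^i_{test,t};\theta^{i,t-1}(\theta))$, I would differentiate the product $\big[\text{integrand}\big]\cdot\big[\text{sampling density}\big]$ by the product rule. Differentiating the integrand $J_i(\theta^{i,\zeta}(\theta))$ gives, by the chain rule, $\big(\prod_{t=1}^\zeta \tfrac{\partial \theta^{i,t}}{\partial \theta^{i,t-1}}\big)\nabla J_i(\theta^{i,\zeta}(\theta))$, and since $\tfrac{\partial}{\partial\theta^{i,t-1}}\Psi_i(\theta^{i,t-1},\D^i_{test,t})=I+\alpha\tnabla^2 J_i(\theta^{i,t-1}(\theta),\D^i_{test,t})$ (using that $\tnabla J_i$ is the empirical average of $g_i=\nabla_\theta\nu_i$ and $\tnabla^2 J_i$ is the empirical average of $u_i$, as recorded in Section 2.1), this produces the first term. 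Differentiating the sampling density uses $\nabla_\theta \log q_i(\D^i_{test,t};\theta^{i,t-1}(\theta)) = \sum_{\tau\in\D^i_{test,t}}\nabla\log\pi_i(\tau;\theta^{i,t-1}(\theta))$ (Appendix \ref{remark_batch}), composed with the Jacobian $\prod_{t'=1}^{t-1}(I+\alpha\tnabla^2 J_i(\theta^{i,t'-1}(\theta),\D^i_{test,t'}))$ coming from the chain of inner updates between $\theta$ and $\theta^{i,t-1}(\theta)$; summing over which batch index $t$ is being differentiated yields the second term, with the factor $J_i(\theta^{i,\zeta}(\theta))$ pulled out. Reassembling gives exactly the claimed formula.

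The main obstacle is bookkeeping rather than a genuine mathematical difficulty: one must be careful that each batch $\D^i_{test,t}$ is drawn from a distribution that depends on $\theta$ only through $\theta^{i,t-1}(\theta)$, so the likelihood-ratio derivative of the $t$-th density must be chained back to $\theta$ through exactly $t-1$ inner-update Jacobians (and through zero for $t=1$, consistent with the empty product). A clean way to organize this is an induction on $\zeta$: the base case $\zeta=1$ is \eqref{grad_v_def}, and for the inductive step one writes $V_{\zeta}(\theta)=\E_i\E_{\D^i_{test,1}}[\,\widetilde V_{\zeta-1}(\Psi_i(\theta,\D^i_{test,1}))\,]$ where $\widetilde V_{\zeta-1}$ is a $(\zeta-1)$-step value built starting from the updated point, differentiates using the $\zeta=1$ computation for the outer layer plus the induction hypothesis for the inner $\zeta-1$ layers, and checks the product/sum indices line up. I would also note that all the differentiations under the expectation/sum are justified because, by Lemma~\ref{lemma_all_in_all}, $J_i$, $\nabla J_i$, $g_i$, $u_i$ and the score functions are all bounded and Lipschitz, the task set is finite, and for $\alpha\in(0,1/\eta_H]$ the maps $\Psi_i$ are well-behaved, so dominated convergence applies.
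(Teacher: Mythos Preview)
Your proposal is correct and matches the paper's proof essentially line for line: the paper also introduces the iterates $\theta^{i,t}(\theta)$, writes the expectation as a sum against $\prod_t q_i(\D_{test,t}^i;\theta^{i,t-1}(\theta))$, applies the product rule to split the derivative into the integrand term and the density term, and then uses the chain rule together with $\partial\Psi_i/\partial\theta = I+\alpha\tnabla^2 J_i$ and $\nabla_\theta\log q_i(\D;\cdot)=\sum_\tau\nabla\log\pi_i(\tau;\cdot)$ to obtain the two displayed terms. Your suggested induction on $\zeta$ and the explicit dominated-convergence justification are extra organizational touches not in the paper, but the underlying computation is identical.
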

\begin{proof}
To simplify the notation, let us define $\theta^{i,0}(\theta) := \theta$ and $\theta^{i,t}(\theta) := \Psi_i(...(\Psi_i(\theta, \D_{test,1}^{i})...),\D_{test, t}^{i})$ for $t \geq 1$. Then, $V_\zeta(\theta)$ can be cast as
\begin{equation}\label{Meta-RL_prob_zeta_2}
V_\zeta(\theta) = \E_{i \sim p} \left [ \E_{\{\D_{test,t}^{i}\}_{t=1}^{\zeta}} \left [ J_i \left ( \theta^{i,\zeta}(\theta) \right ) \right ] \right ].	
\end{equation}
Note that 
\begin{equation}\label{der_Psi}
\frac{\partial}{\partial \theta} \Psi_i(\theta, \D^i) = I + \alpha \tnabla^2 J_i(\theta, \D^i).	
\end{equation}
Now, using \eqref{der_Psi} along with chain rule, we have
\begin{align}\label{der_theta}
\frac{\partial}{\partial \theta}  \theta^{i,t}(\theta)  = & \frac{\partial}{\partial \theta} \left (  \Psi_i(...(\Psi_i(\theta, \D_{test,1}^{i})...),\D_{test, t}^{i}) \right ) = \prod_{t'=1}^t (I+\alpha \tnabla^2 J_i(\theta^{i,t'-1}(\theta), \D_{test,t'}^i))
\end{align}
for any $t \geq 1$.

Using the formulation for derivative of product of functions, we obtain: 
\begin{align}
 \nabla V_\zeta(\theta) &=  \nabla_\theta \E_{i \sim p} \left [  \sum_{\{\D_{test, t}^{i}\}_{t=1}^\zeta} J_i \left ( \theta^{i,\zeta}(\theta) \right ) \prod_{t=1}^\zeta q_i(\D_{test, t}^{i};\theta^{i,t-1}(\theta)) \right ] \nonumber \\
&= \E_{i \sim p} \left [ \sum_{\{\D_{test, t}^{i}\}_{t=1}^\zeta} \frac{\partial}{\partial \theta} \left ( J_i \left ( \theta^{i,\zeta}(\theta) \right ) \right ) \prod_{t=1}^\zeta q_i(\D_{test, t}^{i};\theta^{i,t-1}(\theta)) \right. \nonumber \\
& \left . + \sum_{\{\D_{test, t}^{i}\}_{t=1}^\zeta} \left ( J_i \left ( \theta^{i,\zeta}(\theta) \right ) \sum_{t=1}^\zeta \left ( \frac{\partial}{\partial \theta} \left ( q_i(\D_{test, t}^{i};\theta^{i,t-1}(\theta)) \right ) \prod_{\substack{t'=1 \\ t' \neq t}}^\zeta q_i(\D_{test, t'}^{i};\theta^{i,t'-1}(\theta)) \right )  \right )   \right]. \label{grad_V_3}
\end{align}
Now, note that, by using chain rule, we have
\begin{align}
\frac{\partial}{\partial \theta} \left ( q_i(\D_{test, t}^{i};\theta^{i,t-1}(\theta)) \right ) &= \frac{\partial}{\partial \theta}  \theta^{i,t-1}(\theta)  \nabla_\theta q_i(\D_{test, t}^{i};\theta^{i,t-1}(\theta)) \nonumber \\
&= 	\frac{\partial}{\partial \theta}  \theta^{i,t-1}(\theta) \nabla_\theta \log q_i(\D_{test, t}^{i};\theta^{i,t-1}(\theta))  q_i(\D_{test, t}^{i};\theta^{i,t-1}(\theta)) \label{grad_V_1}
\end{align}
Plugging \eqref{grad_V_1} in \eqref{grad_V_3}, we obtain
\begin{align}
\nabla & V_\zeta(\theta) = \nonumber \\
&= \E_{i \sim p} \left [ \sum_{\{\D_{test, t}^{i}\}_{t=1}^\zeta} \frac{\partial}{\partial \theta} \left ( J_i \left ( \theta^{i,\zeta}(\theta) \right ) \right ) \prod_{t=1}^\zeta q_i(\D_{test, t}^{i};\theta^{i,t-1}(\theta)) \right. \nonumber \\
& \left . + \sum_{\{\D_{test, t}^{i}\}_{t=1}^\zeta} \left ( J_i \left ( \theta^{i,\zeta}(\theta) \right ) \sum_{t=1}^\zeta \left ( \frac{\partial}{\partial \theta}  \theta^{i,t-1}(\theta) \nabla_\theta \log q_i(\D_{test, t}^{i};\theta^{i,t-1}(\theta)) \right ) \prod_{t=1}^\zeta q_i(\D_{test, t}^{i};\theta^{i,t-1}(\theta)) \right )   \right] \nonumber \\
& = \E_{i \sim p} \E_{\{\D_{test,j}^{i}\}_{t=1}^{\zeta}} \left [ \frac{\partial}{\partial \theta} \left ( J_i \left ( \theta^{i,\zeta}(\theta) \right ) \right ) + J_i \left ( \theta^{i,\zeta}(\theta) \right ) \sum_{t=1}^\zeta \left ( \frac{\partial}{\partial \theta}  \theta^{i,t-1}(\theta) \nabla_\theta \log q_i(\D_{test, t}^{i};\theta^{i,t-1}(\theta)) \right ) \right ] \nonumber \\
& = \E_{i \sim p} \E_{\{\D_{test,j}^{i}\}_{t=1}^{\zeta}} \left [ \frac{\partial}{\partial \theta} \theta^{i,\zeta}(\theta) \nabla J_i(\theta^{i,\zeta}(\theta)) \right . \nonumber \\
& \left . \quad \quad \quad + J_i \left ( \theta^{i,\zeta}(\theta) \right ) \sum_{t=1}^\zeta \left ( \frac{\partial}{\partial \theta}  \theta^{i,t-1}(\theta) \nabla_\theta \log q_i(\D_{test, t}^{i};\theta^{i,t-1}(\theta)) \right ) \right ] \label{grad_V_2}
\end{align}
where the last equality is derived by substituting $\frac{\partial}{\partial \theta} \left ( J_i \left ( \theta^{i,\zeta}(\theta) \right ) \right )$ by $\frac{\partial}{\partial \theta}  \theta^{i,t-1}(\theta) \nabla J_i(\theta^{i,\zeta}(\theta))$ by using chain rule. Now, we characterize $\nabla_\theta \log q_i(\D_{test, t}^{i};\theta^{i,t-1}(\theta))$ which appears in \eqref{grad_V_2}. First, recall that
\begin{equation*}
\nabla_\theta \log q_i(\D_{test, t}^{i};\theta^{i,t-1}(\theta)) = \sum_{\tau \in \D_{test, t}^{i}} \nabla_\theta \log q_i(\tau;\theta^{i,t-1}(\theta)).
\end{equation*}
Therefore, 
\begin{align}
\nabla_\theta \log q_i(\D_{test, t}^{i};\theta^{i,t-1}(\theta)) &= \sum_{\tau \in \D_{test, t}^{i}} \nabla_\theta \log q_i(\tau;\theta^{i,t-1}(\theta)) \nonumber \\
& = \sum_{\tau = ((s_j,a_j)_{j=0}^H) \in \D_{test, t}^{i}} \sum_{h=0}^H \nabla_\theta \log 	\pi_i(a_h|s_h;\theta^{i,t-1}(\theta)) \nonumber \\
& = \sum_{\tau \in \D_{test, t}^{i}} \nabla_\theta \log 	\pi_i(\tau;\theta^{i,t-1}(\theta)) \label{der_q_batch}
\end{align}
where the second equality follows from \eqref{traj_prob} and we used the notation \eqref{pi_tau} for the last equality. Plugging \eqref{der_q_batch} and \eqref{der_theta} in \eqref{grad_V_2}, we obtain
\begin{align}
& \nabla V_\zeta(\theta) = 	\E_{i \sim p} \E_{\{\D_{test,j}^{i}\}_{t=1}^{\zeta}} \left [ \prod_{t=1}^\zeta (I+\alpha \tnabla^2 J_i(\theta^{i,t-1}(\theta), \D_{test,t'}^i)) \nabla J_i(\theta^{i,\zeta}(\theta)) \right . \nonumber \\
& \left . + J_i \left ( \theta^{i,\zeta}(\theta) \right ) \sum_{t=1}^\zeta \left ( \prod_{t'=1}^{t-1} (I+\alpha \tnabla^2 J_i(\theta^{i,t'-1}(\theta), \D_{test,t'}^i)) \sum_{\tau \in \D_{test, t}^{i}} \nabla_\theta \log 	\pi_i(\tau;\theta^{i,t-1}(\theta)) \right ) \right ].
\end{align}
\end{proof}
\begin{algorithm}[tb]
\caption{Multi-Step SG-MRL}
\label{Algorithm3} 
\begin{algorithmic}
    \STATE {\bfseries Input:} Initial iterate $\theta_0$
	\REPEAT
    \STATE Draw a batch of \textit{i.i.d.} tasks (MDPs) $\B_k \subseteq \I$ from distribution $p$ and with size $B = |\B_k|$;
    \STATE Set $\theta_{k+1}^{i,0} = \theta_k $;
    \FOR{all $\T_i$ with $i \in \B_k$}
    \FOR{$t \gets 1 \text{ to } \zeta$}
    \STATE Sample a batch of trajectories  $\D_{in,t}^{i}$ w.r.t. $q_i(.;\theta_{k+1}^{i,t-1})$;
    \STATE Set $\theta_{k+1}^{i,t} = \theta_{k+1}^{i,t-1} + \al \tilde{\nabla}J_i(\theta_{k+1}^{i,t-1},\D_{in,t}^{i})$;
    \ENDFOR
    \ENDFOR
    \STATE Set $\theta_{k+1} = \displaystyle{\theta_k +  \beta \tnabla V_\zeta(\theta_k; \B_k, \{\D_{in,t}^i\}_{i,t}, \D_o^i) }$ where $\tnabla V_\zeta (.;.)$ is given by \eqref{unbiased_der_V_zeta}; 
    \STATE $k \gets k + 1$
    \UNTIL{not done}
\end{algorithmic}    
\end{algorithm}
As a consequence, 
\begin{align}\label{unbiased_der_V_zeta}
 & \tnabla V_\zeta(\theta; \B_k, \{\D_{in,t}^i\}_{i,t}, \D_o^i) := \frac{1}{B} \sum_{i \in \B_k} \left ( \prod_{t=1}^\zeta (I+\alpha \tnabla^2 J_i(\theta^{i,t-1}(\theta), \D_{in,t'}^i)) \tnabla J_i(\theta^{i,\zeta}(\theta), \D_{o}^i) \right . \nonumber \\
& \left . \quad + \tilde{J_i} \left ( \theta^{i,\zeta}(\theta), \D_{o}^i \right ) \sum_{t=1}^\zeta \left ( \prod_{t'=1}^{t-1} (I+\alpha \tnabla^2 J_i(\theta^{i,t'-1}(\theta), \D_{in,t'}^i)) \sum_{\tau \in \D_{in, t}^{i}} \nabla_\theta \log 	\pi_i(\tau;\theta^{i,t-1}(\theta)) \right ) \right )	
\end{align}
is an unbiased estimate of $\nabla V_\zeta(\theta)$ where $\B_k$ is a batch of tasks drawn independently from distribution $p$ and $\D_{in, t}^{i}$ and $\D_{o}^i$ are batch of trajectories drawn according to $q_i(.;\theta_{k+1}^{i,t-1})$ and $q_i(.;\theta_{k+1}^{i,\zeta})$, respectively. The steps of SG-MRL using this unbiased estimate are illustrated in Algorithm \ref{Algorithm3}.

\section{On Softmax Policy}\label{app:Hessian_softmax}
First, we show that
\begin{equation}\label{Hessian_policy_softmax}
\begin{split}	
\nabla_\theta^2 & \log \pi(a|s;\theta) =  \\ 
& - \E_{a' \sim \pi(a'|s,\theta)} \left [ \left (\phi(a',s) - \E_{a'' \sim \pi(a''|s,\theta)}[\phi(a'',s)] \right ) \left (\phi(a'',s) - \E_{a'' \sim \pi(a'|s,\theta)}[\phi(a'',s)] \right )^\top \right ].
\end{split}
\end{equation}
Note that
\begin{align}
\nabla_\theta^2 \log \pi(a|s;\theta)	 &= - \frac{\partial}{\partial \theta} \E_{a' \sim \pi(a'|s,\theta)}[\phi(a',s)] \nonumber \\
& = - \frac{\partial}{\partial \theta} \sum_{a' \in \A} \pi(a'|s,\theta)\phi(a',s) \nonumber \\
& = - \sum_{a' \in \A} \phi(a',s) \nabla_\theta \pi(a'|s,\theta)^\top \\
& = - \sum_{a' \in \A} \phi(a',s) \nabla_\theta \log \pi(a'|s,\theta)^\top \pi(a'|s,\theta) \label{softmax_Hessian_1} \\
& = - \E_{a' \sim \pi(a'|s,\theta)} \left [ \phi(a',s) \nabla_\theta \log \pi(a'|s,\theta)^\top \right ] \nonumber \\
& = - \E_{a' \sim \pi(a'|s,\theta)} \left [ \phi(a',s) \left (\phi(a',s) - \E_{a'' \sim \pi(a''|s,\theta)}[\phi(a'',s)] \right )^\top \right ] \label{softmax_Hessian_2} \\
& = - \E_{a' \sim \pi(a'|s,\theta)} \left [ \phi(a',s) \phi(a',s)^\top \right ] + \E_{a' \sim \pi(a'|s,\theta)}[\phi(a',s)] (\E_{a' \sim \pi(a'|s,\theta)}[\phi(a',s)])^\top \nonumber 
\end{align}
where \eqref{softmax_Hessian_1} follows from the log trick, i.e., the fact that $\nabla_\theta \pi(a'|s,\theta) = \nabla_\theta \log \pi(a'|s,\theta) \pi(a'|s,\theta) $, and \eqref{softmax_Hessian_2} is obtained using \eqref{softmax_derivative}.

\noindent Next, we assume $\phi(.,.)$ is bounded and want to show $\nabla_\theta^2 \log \pi(a|s;\theta)$ is a Lipschitz function of $\theta$. First, note that $\nabla_\theta \log \pi(a|s;\theta)$ given by \eqref{softmax_derivative} is bounded due to boundedness of $\phi(.,.)$. Thus, by Lemma \ref{lemma_smoothness_expect}, $\E_{a'' \sim \pi(a''|s,\theta)}[\phi(a',s)]$ is Lipschitz, and it is also bounded as $\phi(.,.)$ is bounded. Hence, the term 
\begin{equation*}
\left (\phi(a',s) - \E_{a'' \sim \pi(a''|s,\theta)}[\phi(a'',s)] \right ) \left (\phi(a'',s) - \E_{a'' \sim \pi(a'|s,\theta)}[\phi(a'',s)] \right )^\top	
\end{equation*}
is bounded, as it is also Lipschitz by Lemma \ref{total_smoothness}. Finally, applying Lemma \ref{lemma_smoothness_expect} one more time shows \eqref{Hessian_policy_softmax} is Lipschitz which completes the proof.
\section{Proof of Lemma \ref{lemma_all_in_all}}\label{proof_lemma_all_in_all}
\noindent \textbf{Proof of (1) \& (2)}: check \cite{shen2019hessian}.

\noindent \textbf{Proof of (3):} Note that it suffices to show the for one trajectory $\tau$, $u_i(\tau; \theta)$ is Lipschitz with parameter $\eta_\rho$ as
\begin{equation}
\| \tnabla^2 J_i (\theta_1, \D^i) - \tnabla^2 J_i (\theta_2, \D^i) \| \leq \frac{1}{|\D^i|} \sum_{\tau \in \D^i} \| u_i(\tau; \theta_1) - u_i(\tau; \theta_2)\|.	
\end{equation}
Let $\tau=(s_0,a_0,...,s_H,a_H)$. Recall that
\begin{align}
u_i(\tau; \theta) &= g_i(\tau;\theta)  \nabla_\theta \log q_i(\tau;\theta)^\top + \nabla_\theta^2 \nu_i(\tau;\theta) \nonumber \\
&= 	g_i(\tau;\theta) \left (\sum_{h=0}^H \nabla_\theta \log 	\pi_i(a_h|s_h;\theta) \right )^\top + \sum_{h=0}^H \nabla^2 \log \pi_i(a_h|s_h;\theta) \MR_i^h(\tau). \label{lemma_3_1}
\end{align}
We now show both terms in \eqref{lemma_3_1} are Lipschitz and characterize their Lipschitz parameters. First, note that $g_i(\tau;\theta)$ is bounded by $\eta_G$. Also, note that
\begin{align}
\|g_i(\tau;\theta_1) - g_i(\tau;\theta_2)\|	&= \| \sum_{h=0}^H \left ( \left (\nabla_\theta \log 	\pi_i(a_h|s_h;\theta_1) - \nabla_\theta \log 	\pi_i(a_h|s_h;\theta_2) \right ) \MR_i^h(\tau) \right ) \| \nonumber \\
& \leq \sum_{h=0}^H \left (  \| \nabla_\theta \log 	\pi_i(a_h|s_h;\theta_1) - \nabla_\theta \log 	\pi_i(a_h|s_h;\theta_2) \| \MR_i^h(\tau) \right ) \nonumber \\
& \leq \sum_{h=0}^H \left (  L \| \theta_1 - \theta_2 \| \MR_i^h(\tau) \right )\label{lemma_3_2} \\
& \leq L \| \theta_1 - \theta_2 \| \sum_{h=0}^H \frac{R \gamma^h}{1-\gamma} \label{lemma_3-3} \\
& \leq \frac{LR}{(1-\gamma)^2} \| \theta_1 - \theta_2 \| \nonumber
\end{align}
where \eqref{lemma_3_2} follows from Assumption \ref{assump_2} and \eqref{lemma_3-3} is obtained using the fact that $\MR_i^h(\tau) \leq \frac{R \gamma^h}{1-\gamma}$. In addition, $\sum_{h=0}^H \nabla_\theta \log 	\pi_i(a_h|s_h;\theta)$ is bounded by $(H+1) G$ and is Lipschitz with parameter $(H+1)L$ due to Assumption \ref{assump_2}. As a result, by Lemma \ref{total_smoothness}, the first term of \eqref{lemma_3_1}, i.e., $g_i(\tau;\theta) \left (\sum_{h=0}^H \nabla_\theta \log 	\pi_i(a_h|s_h;\theta) \right )^\top$ is Lipschitz with parameter $\eta_G (H+1)L + (H+1)G \frac{LR}{(1-\gamma)^2}$. Replacing $\eta_G$ implies that Lipschitz parameter is in fact ${2(H+1)GLR}/{(1-\gamma)^2}$.

For the second term of \eqref{lemma_3_1}, note that using Assumption \ref{assump_3} yields
\begin{align}
& \left \| \sum_{h=0}^H \left ( (\nabla^2 \log \pi_i(a_h|s_h;\theta) - \nabla^2 \log \pi_i(a_h|s_h;\theta)) \MR_i^h(\tau) \right ) \right \|	 \leq \sum_{h=0}^H \left (  \rho \| \theta_1 - \theta_2 \| \MR_i^h(\tau) \right )\nonumber \\
& \quad \quad \quad \leq  \rho \| \theta_1 - \theta_2 \| \sum_{h=0}^H \frac{R \gamma^h}{1-\gamma} \leq \frac{ \rho R}{(1-\gamma)^2} \| \theta_1 - \theta_2 \| \nonumber
\end{align}
where the second inequality once again follows from $\MR_i^h(\tau) \leq \frac{R \gamma^h}{1-\gamma}$. Adding up the Lipschitz parameters of both terms of \eqref{lemma_3_1} completes the proof.
\section{On Boundedness and Lipschitz Property of $\nabla V_\zeta(\theta)$}\label{app_smoothness}
In the following Theorem, we characterize boundedness and Lipschitz property of $\nabla V_\zeta(\theta)$ for any $\zeta \geq 1$. 
\begin{theorem}\label{thm_der_V_zeta_parameters}
Consider the objective function $V_\zeta$ defined in \eqref{Meta-RL_prob_zeta} for the case that $\alpha \in (0, {1}/{\eta_H}]$ where $\eta_H$ is given in Lemma \ref{lemma_all_in_all}. Suppose that the conditions in Assumptions~\ref{assump_1}-\ref{assump_3} are satisfied.  Then, for any $\theta \in \R^d$, the norm of $\nabla V_\zeta(\theta)$ is upper bounded by
\begin{equation}\label{bound_grad_V_zeta}
G_V(\zeta):= 2^\zeta (\eta_G + D_{in}GR(H+1))	= 2^\zeta GR \left (\frac{1}{(1-\gamma)^2}+D_{in}(H+1) \right ). 
\end{equation}
Moreover, $\nabla V_\zeta(\theta)$ is Lipschitz with parameter 
\begin{align}\label{smoothness_grad_V_zeta}
L_V(\zeta)& := \zeta 2^{\zeta-1} \alpha \eta_\rho \eta_G  + 2^{2\zeta} \eta_H \\
& + 2^{\zeta}  D_{in}(H+1) \left ( R \left ( 2^\zeta L + (\zeta+2^\zeta) D_{in}G^2(H+1)+(\zeta-1) \alpha \eta_\rho G \right ) + 2^{\zeta+1}\eta_G G \right ) \nonumber
\end{align}
where $\eta_G$ and $\eta_\rho$ are also defined in Lemma \ref{lemma_all_in_all}.
\end{theorem}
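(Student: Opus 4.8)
The plan is to start from the closed-form expression for $\nabla V_\zeta(\theta)$ established in the theorem above and bound its two constituent pieces separately: the ``MAML piece'' $\prod_{t=1}^{\zeta}(I+\alpha\tnabla^2 J_i(\theta^{i,t-1}(\theta),\cdot))\,\nabla J_i(\theta^{i,\zeta}(\theta))$, and the ``RL-specific piece'' $J_i(\theta^{i,\zeta}(\theta))\sum_{t=1}^{\zeta}\big(\prod_{t'=1}^{t-1}(I+\alpha\tnabla^2 J_i)\big)\sum_{\tau}\nabla_\theta\log\pi_i(\tau;\theta^{i,t-1}(\theta))$. Everything then reduces to bounding and computing Lipschitz constants of products and compositions of the elementary quantities controlled by Lemma~\ref{lemma_all_in_all}, for which the bookkeeping Lemmas~\ref{total_smoothness} (products of bounded Lipschitz maps) and~\ref{lemma_smoothness_expect} (expectations over a $\theta$-dependent distribution) are exactly the right tools.

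First I would record the facts that drive the whole argument. Since $\alpha\in(0,1/\eta_H]$ and $\|\tnabla^2 J_i(\cdot,\D)\|\le\eta_H$ by Lemma~\ref{lemma_all_in_all}(ii), each factor $I+\alpha\tnabla^2 J_i(\cdot,\D)$ has operator norm at most $2$; combined with the Jacobian identity \eqref{der_theta} this shows that the $t$-step inner map $\theta\mapsto\theta^{i,t}(\theta)$ has Jacobian norm at most $2^t$, hence is $2^t$-Lipschitz, and that each $I+\alpha\tnabla^2 J_i(\theta^{i,t-1}(\theta),\cdot)$ is $(\alpha\eta_\rho 2^{t-1})$-Lipschitz in $\theta$ (composing Lemma~\ref{lemma_all_in_all}(iii) with the $2^{t-1}$-Lipschitzness of $\theta^{i,t-1}$). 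In addition, by Assumption~\ref{assump_1} the reward $J_i(\cdot)$ is uniformly bounded; by Assumption~\ref{assump_2}, $\|\sum_{\tau\in\D}\nabla_\theta\log\pi_i(\tau;\theta)\|\le D_{in}(H+1)G$ and $\|\nabla_\theta\log q_i(\D;\theta)\|$ obeys the same bound, while $\sum_\tau\nabla_\theta\log\pi_i(\tau;\cdot)$ is $D_{in}(H+1)L$-Lipschitz; and by Lemma~\ref{lemma_all_in_all}(i),(ii), $\nabla J_i$ is $\eta_G$-bounded and $\eta_H$-Lipschitz. The boundedness claim \eqref{bound_grad_V_zeta} then follows by submultiplicativity and the triangle inequality applied to the two pieces, using $\sum_{t=1}^{\zeta}2^{t-1}\le 2^{\zeta}$.

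For the smoothness claim \eqref{smoothness_grad_V_zeta}, the feature distinguishing the RL setting from supervised MAML is that the trajectory batches are sampled from $q_i(\cdot;\theta^{i,t-1}(\theta))$, whose law depends on $\theta$. Writing $\nabla V_\zeta(\theta)=\E_{i\sim p}\,\E_{\{\D_t\}}[\Phi_i(\theta,\{\D_t\})]$ with $\Phi_i$ the integrand, differentiation in $\theta$ produces an ``integrand-derivative'' term $\E_i\E_{\{\D_t\}}[\partial_\theta\Phi_i]$ and a ``score'' term $\E_i\E_{\{\D_t\}}\big[\Phi_i\cdot\partial_\theta\log\prod_t q_i(\D_t;\theta^{i,t-1}(\theta))\big]$. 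I would bound the score term by the product of the bound $G_V(\zeta)$ on $\|\Phi_i\|$ and the bound $\sum_{t=1}^{\zeta}2^{t-1}D_{in}(H+1)G$ on the score norm (chain rule through $\theta^{i,t-1}$, as in Lemma~\ref{lemma_smoothness_expect}). For the integrand-derivative term I would bound $\|\partial_\theta\Phi_i\|$ by the Lipschitz constant of $\theta\mapsto\Phi_i(\theta,\{\D_t\})$ at fixed batches, obtained by repeated use of Lemma~\ref{total_smoothness}: the product of the $\zeta$ matrix factors is $2^{\zeta}$-bounded and $\big(\sum_{t}\alpha\eta_\rho 2^{t-1}\cdot 2^{\zeta-1}\big)$-Lipschitz, which multiplied against the $\eta_G$-bounded, $(2^{\zeta}\eta_H)$-Lipschitz vector $\nabla J_i(\theta^{i,\zeta}(\theta))$ yields the $\zeta 2^{\zeta-1}\alpha\eta_\rho\eta_G+2^{2\zeta}\eta_H$ contribution, while the analogous computation for the RL-specific piece (where one also differentiates $J_i(\theta^{i,\zeta}(\theta))$ and each $\sum_\tau\nabla_\theta\log\pi_i(\tau;\theta^{i,t-1}(\theta))$, the latter $D_{in}(H+1)L$-Lipschitz and composed with a $2^{t-1}$-Lipschitz map) yields the remaining terms scaling like $2^{\zeta}D_{in}(H+1)(\cdots)$. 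Collecting the two contributions gives $L_V(\zeta)$; the $\zeta=1$ instance is Proposition~\ref{thm_der_V_1_parameters}.

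The main obstacle is the smoothness estimate: $\Phi_i$ is a sum of products of $\Theta(\zeta)$ matrix/vector factors, each itself a bounded Lipschitz map composed with one of the inner iterates $\theta^{i,t}$, so Lipschitz constants compound multiplicatively, and landing the exact exponential ($2^{2\zeta}$) and linear-in-$\zeta$-times-exponential ($\zeta 2^{\zeta-1}$) factors requires carefully tracking every factor through Lemma~\ref{total_smoothness} rather than a crude union bound. The RL-specific subtlety — differentiating through the $\theta$-dependent trajectory sampling distribution — is conceptually why the extra score term appears, but is handled in essentially one line by the log-derivative identity together with the bound on $\|\nabla_\theta\log q_i(\D;\theta)\|$.
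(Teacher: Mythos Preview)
Your proposal is correct and follows essentially the same route as the paper: you decompose the integrand of $\nabla V_\zeta$ into the MAML piece and the RL-specific piece, use Lemma~\ref{lemma_all_in_all} together with $\alpha\le 1/\eta_H$ to get $\|I+\alpha\tnabla^2 J_i\|\le 2$ and the $2^t$-Lipschitz property of $\theta^{i,t}(\theta)$ (the paper states this as Lemma~\ref{lemma_smoothness_theta}), bound and differentiate each factor via Lemma~\ref{total_smoothness}, and then pass the Lipschitz estimate through the $\theta$-dependent sampling distribution using Lemma~\ref{lemma_smoothness_expect} with the score bound $\|\nabla_\theta\log q_i(\D;\theta)\|\le D_{in}(H+1)G$. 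One small inconsistency: your stated Lipschitz constant for the $\zeta$-fold matrix product, $\sum_{t}\alpha\eta_\rho 2^{t-1}\cdot 2^{\zeta-1}$, does not equal the $\zeta\,2^{\zeta-1}\alpha\eta_\rho$ you then claim; the paper obtains the latter by treating each factor $I+\alpha\tnabla^2 J_i(\theta^{i,t'-1}(\theta),\cdot)$ as $\alpha\eta_\rho$-Lipschitz directly (Lemma~\ref{lemma_smoothness_theta}), so to recover the exact constant \eqref{smoothness_grad_V_zeta} you should follow that accounting rather than pre-composing with the $2^{t'-1}$-Lipschitz map at this step.
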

\begin{proof}
Recall from \eqref{grad_V_2} in Appendix \ref{der_V_zeta} that
\begin{align}
& \nabla V_\zeta(\theta) = 	\E_{i \sim p} \E_{\{\D_{test,j}^{i}\}_{t=1}^{\zeta}} \left [ \frac{\partial}{\partial \theta} \theta^{i,\zeta}(\theta) \nabla J_i(\theta^{i,\zeta}(\theta)) \right . \nonumber \\
& \left . \quad \quad \quad + J_i \left ( \theta^{i,\zeta}(\theta) \right ) \sum_{t=1}^\zeta \left ( \frac{\partial}{\partial \theta}  \theta^{i,t-1}(\theta) \nabla_\theta \log q_i(\D_{test, t}^{i};\theta^{i,t-1}(\theta)) \right ) \right ] \nonumber \\
& = \E_{i \sim p} \left [ \sum_{\{\D_{test,t}\}_{t=0}^\zeta} \left ( \prod_{t=1}^\zeta q_i(\D_{test, t}^{i};\theta^{i,t-1}(\theta)) \left ( \frac{\partial}{\partial \theta} \theta^{i,\zeta}(\theta) \nabla J_i(\theta^{i,\zeta}(\theta)) \right . \right . \right . \nonumber \\
& \left . \left . \left . \quad \quad \quad + J_i \left ( \theta^{i,\zeta}(\theta) \right ) \sum_{t=1}^\zeta \left ( \frac{\partial}{\partial \theta}  \theta^{i,t-1}(\theta) \nabla_\theta \log q_i(\D_{test, t}^{i};\theta^{i,t-1}(\theta)) \right ) \right ) \right ) \right ] \label{Lip_param_1}
\end{align}
where $\theta^{i,0}(\theta) := \theta$ and $\theta^{i,t}(\theta) := \Psi_i(...(\Psi_i(\theta, \D_{test,1}^{i})...),\D_{test, t}^{i})$ for $t \geq 1$. 
To show the desired result, we first characterize the boundedness and Lipschitz property of
\begin{align}\label{main_term_der_V}
\frac{\partial}{\partial \theta} \theta^{i,\zeta}(\theta) \nabla J_i(\theta^{i,\zeta}(\theta)) + 	J_i \left ( \theta^{i,\zeta}(\theta) \right ) \sum_{t=1}^\zeta \left ( \frac{\partial}{\partial \theta}  \theta^{i,t-1}(\theta) \nabla_\theta \log q_i(\D_{test, t}^{i};\theta^{i,t-1}(\theta)) \right )
\end{align}
for any $i$ and any sequence of batches $\{\D_{test,t}\}_{t=0}^\zeta$. In particular, we show \eqref{main_term_der_V} is bounded by $G_V(\zeta)$, and therefore, the bound holds for $\nabla V_\zeta(\theta)$ as well. Furthermore, we show a bound on the Lipschitz parameter of \eqref{main_term_der_V} which is independent of both $\{\D_{test,t}\}_{t=0}^\zeta$ and $i$, and we obtain it by showing each term in \eqref{main_term_der_V} is bounded and Lipschitz and then applying Lemma \ref{total_smoothness}. Finally, to show \eqref{smoothness_grad_V_zeta}, we use Lemma \ref{lemma_smoothness_expect}. 

We now start with studying boundedness and Lipschitz property of \eqref{main_term_der_V}. In this regard, first, we show the following lemma on the Lipschitz property of $\theta^{i,t}(\theta)$ and its derivative for any $t$:
\begin{lemma}\label{lemma_smoothness_theta}
Let $t \geq 1$, and recall that 	$\theta^{i,t}(\theta) := \Psi_i(...(\Psi_i(\theta, \D_{test,1}^{i})...),\D_{test, t}^{i})$ for a sequence of batch of trajectories $\{D_{test,j}^i\}_{j=1}^t$. Then, for any $\theta, \ttheta$, we have
\begin{enumerate}
	\item 
\begin{equation}
\| \frac{\partial}{\partial \theta}  \theta^{i,t}(\theta)\| \leq (1+\alpha \eta_H)^{t} , \quad \text{ and thus } \| \theta^{i,t}(\theta) - \theta^{i,t}(\ttheta)\| \leq (1+\alpha \eta_H)^t \|\theta - \ttheta \|,
\end{equation}
	\item 
\begin{equation}
\| \frac{\partial}{\partial \theta}  \theta^{i,t}(\theta) - \frac{\partial}{\partial \theta}  \theta^{i,t}(\ttheta)\| \leq t \alpha \eta_\rho (1+\alpha \eta_H)^{t-1} \|\theta - \ttheta \|	
\end{equation}
\end{enumerate}
where $\eta_H$ and $\eta_\rho$ are given in Lemma \ref{lemma_all_in_all}. 
\end{lemma}
\begin{proof}
Recall from \eqref{der_theta} in Appendix \ref{der_V_zeta} that
\begin{align}
\frac{\partial}{\partial \theta}  \theta^{i,t}(\theta)  = \prod_{t'=1}^t (I+\alpha \tnabla^2 J_i(\theta^{i,t'-1}(\theta), \D_{test,t'}^i))
\end{align}	
In part (2) of Lemma \ref{lemma_all_in_all} we showed that for any $t'$, $\|\tnabla^2 J_i(\theta^{i,t'-1}(\theta), \D_{test,t'}^i)\| \leq \eta_H$, and this immediately implies the first result. 

\noindent Also, for the second result, note that for each $t'$, $I+\alpha \tnabla^2 J_i(\theta^{i,t'-1}(\theta), \D_{test,t'}^i)$ is bounded by $1+\alpha \eta_H$ due to part (2) of Lemma \ref{lemma_all_in_all}, and is Lipschitz with parameter $\alpha \eta_\rho$ by part (3) of Lemma \ref{lemma_all_in_all}. Thus, using Lemma \ref{total_smoothness} gives us the desired result. 
\end{proof}
Next, we go step by step and study the boundedness and Lipschitz property of each term in \eqref{main_term_der_V}. Throughout this process, we also use the assumption $\alpha \leq 1/\eta_H$ to replace the term $(1+\alpha \eta_H)$ by $2$ and simplify the results.
\begin{enumerate}[label=(\roman*)]
\item As we showed in Lemma \ref{lemma_smoothness_theta}, $\frac{\partial}{\partial \theta}  \theta^{i,\zeta}(\theta)$ is bounded by $2^{\zeta}$ and also Lipschitz with parameter $\zeta \alpha \eta_\rho 2^{\zeta-1}$. 
Also, $\nabla J_i(\theta^{i,\zeta}(\theta))$ is bounded by $\eta_G$ by part (1) of Lemma \ref{lemma_all_in_all} and is Lipschitz with parameter $\eta_H 2^{\zeta}$ by using part (2) of Lemma \ref{lemma_all_in_all} and Lemma \ref{lemma_smoothness_theta} along with the fact that the Lipschitz parameter of combination of functions is the product of their Lipschitz parameters.
Thus, using Lemma \ref{total_smoothness}, the term $\frac{\partial}{\partial \theta} \theta^{i,\zeta}(\theta) \nabla J_i(\theta^{i,\zeta}(\theta))$ in total is bounded by $\eta_G2^{\zeta} $ and is Lipschitz with parameter $\zeta 2^{\zeta-1} \alpha \eta_\rho \eta_G  + 2^{2\zeta} \eta_H $.

\item For any $t$, and by Lemma \ref{lemma_smoothness_theta}, $\frac{\partial}{\partial \theta}  \theta^{i,t-1}(\theta)$ is bounded by $2^{t-1}$ and its Lipschitz parameter is bounded by $(t-1)2^{t-1}\alpha \eta_\rho$. 

\noindent Also, it is easy to check 
\begin{equation}
\| \nabla_\theta \log q_i(\D_{test, t}^{i}; \theta) \| \leq D_{in}G(H+1), \quad \| \nabla_\theta^2 \log q_i(\D_{test, t}^{i}; \theta) \| \leq D_{in}L(H+1).	
\end{equation}
Hence, $\nabla_\theta \log q_i(\D_{test, t}^{i};\theta^{i,t-1}(\theta))$ is bounded by $D_{in}G(H+1)$. In addition, since $\theta^{i,t-1}(\theta)$ is Lipschitz with parameter $2^{t-1}$, the whole $\nabla_\theta \log q_i(\D_{test, t}^{i};\theta^{i,t-1}(\theta))$ is Lipschitz with parameter $2^{t-1} D_{in}L(H+1)$.

\noindent Thus, for any $t$, the term $\frac{\partial}{\partial \theta}  \theta^{i,t-1}(\theta) \nabla_\theta \log q_i(\D_{test, t}^{i};\theta^{i,t-1}(\theta))$ is bounded by $2^{t-1}D_{in}G(H+1)$ and is Lipschitz with parameter $D_{in}(H+1)(2^{2t-2}L + (t-1)2^{t-1}\alpha \eta_\rho G)$. As a consequence, the sum 
\begin{equation}
\sum_{t=1}^\zeta \left ( \frac{\partial}{\partial \theta}  \theta^{i,t-1}(\theta) \nabla_\theta \log q_i(\D_{test, t}^{i};\theta^{i,t-1}(\theta)) \right )	
\end{equation}
is bounded by $2^{\zeta}D_{in}G(H+1)$ and its Lipschitz parameter is bounded by 
$$D_{in}(H+1)\left ( 4^\zeta L + 2^\zeta (\zeta-1) \alpha \eta_\rho G \right ).$$

\item $J_i \left ( \theta^{i,\zeta}(\theta) \right )$ is clearly bounded by $R$. Also, by part(1) of Lemma \ref{lemma_all_in_all} $J_i$ is Lipschitz with parameter $\eta_G$ and also by Lemma \ref{lemma_smoothness_theta}, $\theta^{i,\zeta}(\theta)$ is Lipschitz with parameter $2^\zeta$. Using these two along with the fact that Lipschitz parameter of combination of functions is equal to the product of their Lipschitz parameters, implies that $J_i \left ( \theta^{i,\zeta}(\theta) \right )$ is Lipschitz with parameter $2^\zeta \eta_G$.

\item Therefore, using (iv) and (v), the whole term 
\begin{equation}
\prod_{t=1}^\zeta q_i(\D_{test, t}^{i};\theta^{i,t-1}(\theta)) J_i \left ( \theta^{i,\zeta}(\theta) \right ) \sum_{t=1}^\zeta \left ( \frac{\partial}{\partial \theta}  \theta^{i,t-1}(\theta) \nabla_\theta \log q_i(\D_{test, t}^{i};\theta^{i,t-1}(\theta)) \right )	
\end{equation}
is bounded by $2^{\zeta}D_{in}GR(H+1)$ and, by Lemma \ref{total_smoothness}, its Lipschitz parameter is bounded by 
\begin{equation*}
D_{in}R(H+1)\left ( 4^\zeta L + 2^\zeta (\zeta-1) \alpha \eta_\rho G \right ) + 2^{2\zeta}D_{in}G(H+1) \eta_G + R \zeta 2^{\zeta}D_{in}^2G^2(H+1)^2.
\end{equation*}
which can be simplified and written as
\begin{equation*}
2^{\zeta}  D_{in}(H+1) \left ( R \left ( 2^\zeta L + \zeta D_{in}G^2(H+1)+(\zeta-1) \alpha \eta_\rho G \right ) + 2^\zeta\eta_G G \right )	
\end{equation*}
\end{enumerate}
Part (i) and (iv) together imply that \eqref{main_term_der_V} is bounded by 
\begin{equation}
2^\zeta (\eta_G + D_{in}GR(H+1))	= 2^\zeta GR \left (\frac{1}{(1-\gamma)^2}+D_{in}(H+1) \right ) 
\end{equation}
which is in fact $G_V(\zeta)$. Since this upper bound is independent of $i$ and $\{D_{test,t}^i\}_t$, it also holds for $\nabla V_\zeta(\theta)$, and this completes the proof of \eqref{bound_grad_V_zeta}.

\noindent Also, part (i) and (iv) together imply that \eqref{main_term_der_V} is Lipschitz with parameter 
\begin{equation}\label{Lip_par_3}
\zeta 2^{\zeta-1} \alpha \eta_\rho \eta_G  + 2^{2\zeta} \eta_H  + 2^{\zeta}  D_{in}(H+1) \left ( R \left ( 2^\zeta L + \zeta D_{in}G^2(H+1)+(\zeta-1) \alpha \eta_\rho G \right ) + 2^\zeta\eta_G G \right ).
\end{equation}

Now, to derive the Lipschitz parameter of $\nabla V_\zeta(\theta)$ itself, we use Lemma \ref{lemma_smoothness_expect}. To do so, first we show the following lemma.
\begin{lemma}\label{lemma_smoothness_q_pi} 
Recall definition of $q_i(\D^i;\theta)$ \eqref{prob_batch_traj} for some MDP $\M_i$, batch of trajectories $\D^i$ and policy parameter $\theta \in \R^d$. Then, for any $\D^i$ and $\theta$, we have
\begin{equation}
\| \nabla_\theta \log q_i(\D^i;\theta)\| \leq |\D^i| (H+1)G.
\end{equation}
\end{lemma}
\begin{proof}
Note that
\begin{align}
\| \nabla_\theta \log q_i(\D^i;\theta) \| &= \left \| \sum_{\tau \in \D^{i}} \nabla_\theta \log 	\pi_i(\tau;\theta) \right \| \label{lemma_4_1} \\
& \leq  |D^i| \max_{\tau=(s_0,a_0,...,s_H,a_H)} \| \nabla_\theta \log 	\pi_i(\tau;\theta) \|   \nonumber \\
& \leq |D^i| \max_{\tau=(s_0,a_0,...,s_H,a_H)} \sum_{h=0}^H \| \nabla_\theta \log 	\pi_i(a_h|s_h;\theta) \|  \label{lemma_4_2} \\
& \leq |\D^i| (H+1)G
\end{align}
where \eqref{lemma_4_1} follows from \eqref{prob_batch_traj} and \eqref{lemma_4_2} is obtained using \eqref{pi_tau} along with Assumption \ref{assump_2}.
\end{proof}
Using this lemma, we have
\begin{align}
\| \nabla_\theta \left ( \log \prod_{t=1}^\zeta q_i(\D_{test, t}^{i};\theta^{i,t-1}(\theta)) \right ) \| & \leq  \sum_{t=1}^\zeta \| \frac{\partial}{\partial \theta}  \theta^{i,t-1}(\theta) \nabla_\theta \log q_i(\D_{test, t}^{i};\theta^{i,t-1}(\theta)) \| \nonumber \\
& \leq  |D_{in}| (H+1)G 	\sum_{t=1}^\zeta \| \frac{\partial}{\partial \theta}  \theta^{i,t-1}(\theta) \| \label{lip_par_4} \\
& \leq |D_{in}| (H+1)G 	\sum_{t=1}^\zeta 2^{t-1} \label{lip_par_5} \\
& \leq 2^\zeta |D_{in}| (H+1)G \nonumber
\end{align}
where \eqref{lip_par_4} follows from Lemma \ref{lemma_smoothness_q_pi} and \eqref{lip_par_5} is obtained using Lemma \ref{lemma_smoothness_theta}. Now, using this bound and \eqref{Lip_par_3} along with Lemma \ref{lemma_smoothness_expect} implies that $\nabla V_\zeta(\theta)$ is Lipschitz with parameter 
\begin{equation}\label{Lip_par_6}
\zeta 2^{\zeta-1} \alpha \eta_\rho \eta_G  + 2^{2\zeta} \eta_H  + 2^{\zeta}  D_{in}(H+1) \left ( R \left ( 2^\zeta L + (\zeta+2^\zeta) D_{in}G^2(H+1)+(\zeta-1) \alpha \eta_\rho G \right ) + 2^{\zeta+1}\eta_G G \right )
\end{equation}
which completes the proof of \eqref{smoothness_grad_V_zeta}.
\end{proof}

In particular, for $\zeta=1$, it is easy to verify the Lipschitz parameter of $\nabla V_1 (\theta)$ admits the upper bound
\begin{equation}
\alpha \eta_\rho \eta_G + 4 \eta_H + 8 R D_{in}(H+1) (L + D_{in}G^2(H+1)).
\end{equation}
Finally, we state the following result on boundedness of unbiased estimate of $\nabla V_\zeta(\theta)$ used in update of MAML (Algorithm \ref{Algorithm3}).
\begin{lemma}\label{lemma_bound_gradient}
Recall $\tnabla V_\zeta(\theta_k; \B_k, \{\D_{in,t}^i\}_{i,t}, \D_o^i)$	\eqref{unbiased_der_V_zeta} in Multi-step MAML algorithm (Algorithm \ref{Algorithm3}) for the case that $\alpha \in (0, {1}/{\eta_H}]$ where $\eta_H$ is given in Lemma \ref{lemma_all_in_all}. Suppose that the conditions in Assumptions~\ref{assump_1}-\ref{assump_3} are satisfied. Then, at iteration $k+1$, and for any choice of $\B_k$, $\{\D_o^i\}_i$ and $\{\D_{in,t}^i\}_{i,t}$, we have
\begin{equation}
\| \tnabla V_\zeta(\theta_k; \B_k, \{\D_{in,t}^i\}_{i,t}, \D_o^i) \| \leq G_V(\zeta)	
\end{equation}
where $G_V(\zeta)$ is given in Theorem \ref{thm_der_V_zeta_parameters}.
\end{lemma}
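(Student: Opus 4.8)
The plan is to bound the single-task summand appearing in \eqref{unbiased_der_V_zeta} uniformly over the task $i$ and over all the trajectory batches involved, and then conclude by the triangle inequality, since $\tnabla V_\zeta$ is the average of $B$ such summands. This is precisely the boundedness half of the argument already carried out in parts (i) and (iv) of the proof of Theorem~\ref{thm_der_V_zeta_parameters}: the single-task summand in \eqref{unbiased_der_V_zeta} has exactly the structure of \eqref{main_term_der_V}, except that the exact gradient $\nabla J_i$ and exact reward $J_i$ are replaced by their stochastic estimates $\tnabla J_i(\cdot,\D_o^i)$ and $\tilde J_i(\cdot,\D_o^i)$, and that $\nabla_\theta\log q_i(\D_{in,t}^i;\cdot)=\sum_{\tau\in\D_{in,t}^i}\nabla_\theta\log\pi_i(\tau;\cdot)$ by \eqref{der_q_batch}. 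By Lemma~\ref{lemma_all_in_all}(1) the estimate still satisfies $\|\tnabla J_i(\cdot,\D_o^i)\|\le\eta_G$, and $\tilde J_i(\cdot,\D_o^i)$ is still bounded by $R$ (being an average of per-trajectory rewards, as in the proof of Theorem~\ref{thm_der_V_zeta_parameters}), so every bound used there transfers verbatim.

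Concretely, I would first handle the term $\prod_{t=1}^\zeta(I+\alpha\tnabla^2 J_i(\theta^{i,t-1},\D_{in,t}^i))\,\tnabla J_i(\theta^{i,\zeta},\D_o^i)$: by Lemma~\ref{lemma_all_in_all}(2) each of the $\zeta$ factors has operator norm at most $1+\alpha\eta_H\le 2$ (this is where the hypothesis $\alpha\le 1/\eta_H$ enters), so the product has norm at most $2^\zeta$, and multiplying by $\|\tnabla J_i(\theta^{i,\zeta},\D_o^i)\|\le\eta_G$ bounds this term by $2^\zeta\eta_G$. Next, for the additional term $\tilde J_i(\theta^{i,\zeta},\D_o^i)\sum_{t=1}^\zeta\bigl(\prod_{t'=1}^{t-1}(I+\alpha\tnabla^2 J_i(\theta^{i,t'-1},\D_{in,t'}^i))\sum_{\tau\in\D_{in,t}^i}\nabla_\theta\log\pi_i(\tau;\theta^{i,t-1})\bigr)$: the partial product $\prod_{t'=1}^{t-1}(I+\alpha\tnabla^2 J_i)$ has norm at most $2^{t-1}$ by the same reasoning, while Lemma~\ref{lemma_smoothness_q_pi} applied with $|\D^i|=D_{in}$ gives $\|\sum_{\tau\in\D_{in,t}^i}\nabla_\theta\log\pi_i(\tau;\cdot)\|\le D_{in}(H+1)G$; summing over $t$ via the geometric bound $\sum_{t=1}^\zeta 2^{t-1}=2^\zeta-1\le 2^\zeta$ and using $\tilde J_i\le R$ bounds this term by $2^\zeta R D_{in}(H+1)G$. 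Adding the two contributions yields $2^\zeta\eta_G+2^\zeta R D_{in}(H+1)G=2^\zeta(\eta_G+D_{in}GR(H+1))=G_V(\zeta)$, which is independent of $i$ and of the batches, hence also bounds $\tnabla V_\zeta(\theta_k;\B_k,\{\D_{in,t}^i\}_{i,t},\D_o^i)$.

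There is essentially no serious obstacle here: unlike Theorem~\ref{thm_der_V_zeta_parameters}, the argument needs only the \emph{boundedness} parts of Lemma~\ref{lemma_all_in_all} and Lemma~\ref{lemma_smoothness_q_pi}, not any of the Lipschitz or smoothness estimates, so no new lemmas are required. The only points requiring a little care are (a) controlling the norm of a product of matrices times a vector through submultiplicativity of the operator norm, and (b) noting that the stochastic estimates $\tnabla^2 J_i$, $\tnabla J_i$, $\tilde J_i$ obey the same pointwise bounds $\eta_H$, $\eta_G$, $R$ as the population quantities $\nabla^2 J_i$, $\nabla J_i$, $J_i$, so that the bound $G_V(\zeta)$ --- already established for $\nabla V_\zeta$ in Theorem~\ref{thm_der_V_zeta_parameters} --- is not degraded when passing to the sample estimate.
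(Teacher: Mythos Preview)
Your proposal is correct and follows exactly the approach the paper indicates: the paper's own proof explicitly says the argument is done ``very similar to how we proved \eqref{main_term_der_V} in Theorem~\ref{thm_der_V_zeta_parameters}'', noting only that $\|\tnabla J_i(\cdot,\D_o^i)\|\le\eta_G$ by Lemma~\ref{lemma_all_in_all} and $|\tilde J_i(\cdot,\D_o^i)|\le R$ by Assumption~\ref{assump_1}. You have spelled out precisely these details --- in fact more completely than the paper, which simply skips them.
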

\begin{proof}
We skip the details of the proof as it can be done very similar to how we proved \eqref{main_term_der_V} in Theorem \ref{thm_der_V_zeta_parameters}. In particular, note that for any choice of $\D_{o}^i$ 
\begin{equation}
\|\tnabla J_i(\theta^{i,\zeta}(\theta), \D_{o}^i)\| \leq \eta_G, 	\quad \| \tilde{J}_i(\theta^{i,\zeta}(\theta), \D_{o}^i) \| \leq R
\end{equation}
where the first one follows from Lemma \ref{lemma_all_in_all} and the second one is an immediate result of Assumption \ref{assump_1}.
\end{proof}
\section{Proof of Theorem \ref{Thm_Main_Result_1}}\label{proof_Thm_Main_Result}
We first state the general statement of the theorem for any $\zeta \geq 1$.
\begin{theorem}\label{Thm_Main_Result_zeta} 
Consider the objective function $V_\zeta$ defined in \eqref{Meta-RL_prob_zeta} for the case that $\alpha \in (0, {1}/{\eta_H}]$ where $\eta_H$ is given in Lemma \ref{lemma_all_in_all}. Suppose that the conditions in Assumptions~\ref{assump_1}-\ref{assump_3} are satisfied, and recall the definitions $L_V(\zeta)$ and $G_V(\zeta)$ from Theorem \ref{thm_der_V_zeta_parameters}. Consider running Multi-step SG-MRL (Algorithm \ref{Algorithm3}) with $\beta \in (0,1/L_V(\zeta)]$.
Then, for any $1>\eps >0$, MAML finds a solution $\theta_\eps$ such that
\begin{equation}
 \E[ \| \nabla V_\zeta(\theta_\eps) \|^2] \leq  \frac{2 G_V(\zeta)^2 L_V(\zeta) \beta}{BD_o}  + \eps^2
\end{equation}
after at most running for 
\begin{equation}
\bigO(1) \frac{R}{\beta} \min \left \{ \frac{1}{\eps^2},
\frac{B D_o}{G_V(\zeta)^2 L_V(\zeta) \beta} \right \} 
\end{equation}
iterations.
\end{theorem}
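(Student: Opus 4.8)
The statement is a standard convergence bound for unconstrained stochastic gradient \emph{ascent} on the smooth, bounded, nonconcave objective $V_\zeta$, so the plan is to assemble three ingredients already available in the excerpt: (i) $\nabla V_\zeta$ is $L_V(\zeta)$-Lipschitz (Theorem~\ref{thm_der_V_zeta_parameters}); (ii) the update direction $\tnabla V_\zeta(\theta_k)$ used in Algorithm~\ref{Algorithm3} is an unbiased estimator of $\nabla V_\zeta(\theta_k)$, and it is uniformly bounded in norm by $G_V(\zeta)$ (Lemma~\ref{lemma_bound_gradient}); and (iii) $V_\zeta$ is bounded, since $0\le r_i\le R$ forces $0\le V_\zeta(\theta)\le \bar R := R\sum_{h=0}^H\gamma^h = \bigO(R)$ for every $\theta$.

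First I would write the ascent inequality. By $L_V(\zeta)$-smoothness of $V_\zeta$ and the update $\theta_{k+1}=\theta_k+\beta\,\tnabla V_\zeta(\theta_k)$,
\[
V_\zeta(\theta_{k+1}) \ \ge\ V_\zeta(\theta_k) + \beta\,\langle \nabla V_\zeta(\theta_k),\tnabla V_\zeta(\theta_k)\rangle - \tfrac{L_V(\zeta)\beta^2}{2}\,\|\tnabla V_\zeta(\theta_k)\|^2 .
\]
Taking the conditional expectation given $\theta_k$ and using $\E[\tnabla V_\zeta(\theta_k)\mid\theta_k]=\nabla V_\zeta(\theta_k)$ — which holds by integrating first over the outer batches $\D_o^i$, then over the inner batches $\D_{in,t}^i$ for $t=\zeta,\zeta-1,\dots,1$, and finally over $\B_k$, so as to reproduce exactly the formula for $\nabla V_\zeta$ derived in Appendix~\ref{der_V_zeta} — turns the cross term into $\beta\,\|\nabla V_\zeta(\theta_k)\|^2$.

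Next I would control the second moment $\E[\|\tnabla V_\zeta(\theta_k)\|^2\mid\theta_k] = \|\nabla V_\zeta(\theta_k)\|^2 + \E[\|\tnabla V_\zeta(\theta_k)-\nabla V_\zeta(\theta_k)\|^2\mid\theta_k]$, bounding the variance term by a law-of-total-variance argument that exploits the two nested layers of independent sampling: $\tnabla V_\zeta(\theta_k)$ is an average over the $B$ i.i.d.\ tasks of $\B_k$, and, after regrouping, within each task it is an average over the $D_o$ i.i.d.\ outer-loop trajectories of $\D_o^i$; each per-trajectory summand, and each per-task conditional mean (which absorbs the residual randomness coming from the inner batches and from the score vectors $\nabla_\theta\log\pi_i(\tau;\cdot)$, the latter having zero mean since $\E_{\tau\sim q_i(\cdot;\theta)}[\nabla_\theta\log\pi_i(\tau;\theta)]=0$), is bounded in norm by $G_V(\zeta)$ through the pointwise estimates in the proof of Lemma~\ref{lemma_bound_gradient}. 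This yields $\E[\|\tnabla V_\zeta(\theta_k)-\nabla V_\zeta(\theta_k)\|^2\mid\theta_k]\le G_V(\zeta)^2/(BD_o)$. I expect this variance bound to be the main obstacle: it is the only place where the $1/B$ and $1/D_o$ gains enter, and it requires carefully verifying that the RL-specific term $\tilde J_i(\cdot,\D_o^i)\sum_{\tau\in\D_{in}^i}\nabla_\theta\log\pi_i(\tau;\theta_k)$ does not inflate the variance.

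Finally I would telescope. Substituting the variance bound, using $\beta\le 1/L_V(\zeta)$ so that $1-L_V(\zeta)\beta/2\ge\tfrac12$, taking total expectations, summing over $k=0,\dots,K-1$, and bounding $\E[V_\zeta(\theta_K)]-V_\zeta(\theta_0)\le\bar R$, one gets
\[
\frac1K\sum_{k=0}^{K-1}\E\big[\|\nabla V_\zeta(\theta_k)\|^2\big] \ \le\ \frac{2\bar R}{\beta K} + \frac{G_V(\zeta)^2 L_V(\zeta)\beta}{BD_o}.
\]
Choosing $\theta_\eps$ uniformly at random from $\{\theta_0,\dots,\theta_{K-1}\}$ makes the left side $\E[\|\nabla V_\zeta(\theta_\eps)\|^2]$. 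The optimization term $2\bar R/(\beta K)$ falls below $\eps^2$ once $K\ge 2\bar R/(\beta\eps^2)=\bigO(1)\tfrac{R}{\beta}\cdot\tfrac1{\eps^2}$, and it falls below the noise floor $G_V(\zeta)^2L_V(\zeta)\beta/(BD_o)$ once $K\ge \bigO(1)\tfrac{R}{\beta}\cdot\tfrac{BD_o}{G_V(\zeta)^2L_V(\zeta)\beta}$; running for $K$ equal to the smaller of these two counts, in either regime the right-hand side is at most $\tfrac{2G_V(\zeta)^2L_V(\zeta)\beta}{BD_o}+\eps^2$, which is the claimed bound. Specializing to $\zeta=1$ recovers Theorem~\ref{Thm_Main_Result_1}.
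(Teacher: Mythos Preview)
Your proposal is correct and matches the paper's approach: both combine the $L_V(\zeta)$-smoothness of $V_\zeta$, the unbiasedness of $\tnabla V_\zeta$, and the variance bound $\E[\|\tnabla V_\zeta(\theta_k)-\nabla V_\zeta(\theta_k)\|^2]\le G_V(\zeta)^2/(BD_o)$ (which the paper isolates as a separate lemma, proved by the same $BD_o$-term decomposition you sketch) to obtain the one-step inequality $\E[V_\zeta(\theta_{k+1})]\ge \E[V_\zeta(\theta_k)]+\tfrac{\beta}{2}\E[\|\nabla V_\zeta(\theta_k)\|^2]-\tfrac{G_V^2L_V\beta^2}{2BD_o}$. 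The only difference is the final wrap-up: the paper argues by contradiction---if every iterate $k<T$ violated the bound, telescoping and $0\le V_\zeta\le R$ would force $T\le \tfrac{2R}{\beta}\min\{\eps^{-2},\,BD_o/(G_V^2L_V\beta)\}$---whereas you telescope and output a uniformly random iterate, a standard equivalent variant yielding the same iteration count.
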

\begin{proof}
Throughout the proof, we use $G_V$ and $L_V$ instead of 	$G_V(\zeta)$ and $L_V(\zeta)$, respectively, to simplify the notation. Also, we denote the filtration till the end of iteration $k$ by $\F_k$. 

As we previously discussed, $\tnabla V_\zeta(\theta_k; \B_k, \{\D_{in,t}^i\}_{i,t}, \D_o^i)$ is an unbiased estimate of $\nabla V_\zeta(\theta_k)$ at iteration $k+1$. In the following lemma, we upper bound the variance of this estimation.
\begin{lemma}\label{bound_variance}
Recall the definition of $\tnabla V_\zeta(\theta_k; \B_k, \{\D_{in,t}^i\}_{i,t}, \D_o^i)$	 \eqref{unbiased_der_V_zeta} in Multi-step SG-MRL algorithm (Algorithm \ref{Algorithm3}) for the case that $\alpha \in (0, {1}/{\eta_H}]$ where $\eta_H$ is given in Lemma \ref{lemma_all_in_all}. Suppose that the conditions in Assumptions~\ref{assump_1}-\ref{assump_3} are satisfied. Then, at iteration $k+1$, and for any choice of $\B_k$, $\{\D_o^i\}_i$ and $\{\D_{in,t}^i\}_{i,t}$, we have
\begin{equation}
\E \left [ \left \| \tnabla V_\zeta(\theta_k; \B_k, \{\D_{in,t}^i\}_{i,t}, \D_o^i) - \nabla V_\zeta (\theta_k) \right \|^2 \right ] \leq \frac{G_V^2}{BD_o}	
\end{equation}
where $G_V$ is given in Theorem \ref{thm_der_V_zeta_parameters}.
\end{lemma}
\begin{proof}
Note that
\begin{equation}
\tnabla V_\zeta(\theta_k; \B_k, \{\D_{in,t}^i\}_{i,t}, \D_o^i) = \frac{1}{BD_o}\sum_{i \in \B_k} \sum_{\tau \in \D_o^i} \tnabla V_\zeta(\theta_k; \{i\}, \{\D_{in,t}^i\}_{i,t}, \{\tau\}),
\end{equation}
where for any $i$ and $\tau \in \D_o^i$, $\tnabla V_\zeta(\theta_k; \{i\}, \{\D_{in,t}^i\}_{i,t}, \{\tau\})$ is an unbiased estimate of $\nabla V_\zeta (\theta_k)$, and by Lemma \ref{lemma_bound_gradient}, its second moment is bounded by $G_V^2$. Also, note that $\tnabla V_\zeta(\theta_k; \{i\}, \{\D_{in,t}^i\}_{i,t}, \{\tau\})$ are independent for different $i$ and $\tau$. Finally, to complete the proof, we use the well-known fact that if $\{X_i\}_{i=1}^n$ are independent with mean $\mu$, and for each $i$, variance  of $X_i$ is upper bounded by $\sigma^2$, then
\begin{equation*}
\E \left [ \left \| \frac{X_1+...+X_n}{n} - \mu \right \|^2 \right ]	\leq \frac{\sigma^2}{n}.
\end{equation*}
\end{proof}
Now, we get back to the proof of the main result. From now, and to simplify the notation, we use $\tnabla V_\zeta(\theta_k)$ to denote $\tnabla V_\zeta(\theta_k; \B_k, \{\D_{in,t}^i\}_{i,t}, \D_o^i)$. Next, note that, using the smoothness property of $\nabla V_\zeta(\theta)$, we have \cite{nesterov_convex}
\begin{equation}\label{smooth_nest}
\left \vert V_\zeta(\theta_{k+1}) - V_\zeta(\theta_k) - \nabla V_\zeta(\theta_k)^\top (\theta_{k+1} - \theta_k) \right \vert	\leq \frac{L_V^2}{2} \|\theta_{k+1} - \theta_k \|^2.
\end{equation}
Recall that, at iteration $k+1$, MAML performs
\begin{equation}\label{main_proof_1}
\theta_{k+1} = \theta_k + \beta 	\tnabla V_\zeta(\theta_k).
\end{equation}
Plugging this in \eqref{smooth_nest}, we obtain
\begin{align}
- V_\zeta(\theta_{k+1}) & \leq - V_\zeta(\theta_k) - \nabla V_\zeta(\theta_k)^\top (\theta_{k+1} - \theta_k) + \frac{L_V^2}{2} \|\theta_{k+1} - \theta_k \|^2 \nonumber \\
& =   - V_\zeta(\theta_k) - \beta  \nabla V_\zeta(\theta_k)^\top \tnabla V_\zeta(\theta_k) + \frac{L_V^2}{2} \beta^2 \|\tnabla V_\zeta(\theta_k)\|^2 
\end{align}
where the last equality follows from \eqref{main_proof_1}. Next, taking expectation from both sides and conditioning on $\F_k$, implies
\begin{align}
- \E [ & V_\zeta(\theta_{k+1}) | \F_k] \nonumber \\
 &  \leq - V_\zeta(\theta_k) - \beta  \|\nabla V_\zeta(\theta_k)\|^2 + \frac{L_V}{2} \beta^2 \left (\|\nabla V_\zeta(\theta_k)\|^2 + \E \left [\| \tnabla V_\zeta(\theta_k) -  \nabla V_\zeta(\theta_k)\|^2 |\F_k \right ] \right ) \label{main_proof_2} \\
& \leq - V_\zeta(\theta_k) - \frac{\beta}{2} \|\nabla V_\zeta(\theta_k)\|^2 + \frac{G_V^2 L_V \beta^2}{2 BD_o} \label{main_proof_3}
\end{align}
 where the first inequality is obtained using the fact that $\tnabla V_\zeta(\theta_k)$ is an unbiased estimate of $\nabla V_\zeta(\theta_k)$ and $\nabla V_\zeta(\theta_k)$ is deterministic condition on $\F_k$. \eqref{main_proof_3} is also an immediate result of Lemma \ref{bound_variance} along with $\beta \leq 1/L_V$.
 
Taking another expectation from both sided of \eqref{main_proof_3}, and using tower rule, we obtain
 \begin{equation}\label{main_proof_4}
- \E [ V_\zeta(\theta_{k+1}) ]	\leq - \E[V_\zeta(\theta_k)] - \frac{\beta}{2} \E \left[\|\nabla V_\zeta(\theta_k)\|^2 \right ] + \frac{G_V^2 L_V \beta^2}{2 BD_o} .
 \end{equation}
We complete the proof by contradiction. Assume, the desired result does not hold for the first $T$ iterations, i.e., 
\begin{equation}
 \E[ \| \nabla V_\zeta(\theta_k) \|^2] \geq  \frac{2G_V^2 L_V \beta}{BD_o} + \eps^2	
\end{equation}
for any $0 \leq k \leq T-1$. Then, by \eqref{main_proof_4}, for any $0 \leq k \leq T-1$, we have
 \begin{equation}\label{main_proof_5}
 -\E [ V_\zeta(\theta_{k+1}) ]	\leq - \E[V_\zeta(\theta_k)] - \frac{\beta \eps^2}{2} - \frac{G_V^2 L_V \beta^2}{2 BD_o}	.
 \end{equation}
 Adding up this result for $k=0,...,T-1$ yields
 \begin{equation}\label{main_proof_6}
 - \E [ V_\zeta(\theta_{T}) ]	\leq - \E[V_\zeta(\theta_0)] - T \left (\frac{\beta \eps^2}{2} + \frac{G_V^2 L_V \beta^2}{2 B D_o} \right )	.
 \end{equation}
Note that, by Assumption \ref{assump_1}, both $\E [ V_\zeta(\theta_{T}) ] $ and $\E[V_\zeta(\theta_0)] $ have values between zero and $R$, and thus, their difference is bounded by $R$. Therefore,
\begin{equation}
T \left (\frac{\beta \eps^2}{2} + \frac{G_V^2 L_V \beta^2}{2 BD_o} \right ) \leq R	
\end{equation}
which gives us the desired result.
\end{proof}
\section{More Details on the Numerical Experiment Section}
\label{appendix_experiments}

In this section of the Appendix we detail our experimental setup beyond the description given in Section~\ref{sec:exp}. We use a neural network policy 
with two $100$-unit hidden layers and ReLU activations. For simplicity, we use vanilla policy gradient (VPG) for both the inner adaption steps and the outer meta steps.  

In all cases, we train both algorithms for $500$ (meta-)epochs, using a meta-batch size of $20$ tasks for $2$D-navigation and $40$ tasks for the locomotion one. For all tasks, we use $20$ episodes per adaptation step.
All rewards are discounted with a factor $\gamma=0.99$.
We use a horizon $H=100$ for $2$D-navigation and $H=200$ for locomotion tasks.
Next, we use a learning rate of $0.1$ for the inner steps, and $0.001$ for the outer ones. Finally, all experiments are averaged over $10$ random seeds.

The MuJoCo locomotion environments we consider are
\begin{itemize}
    \item \textbf{Half-Cheetah Random Direction} 
which simulates the dynamics of a ``cheetah" robot which is trained to move fast.
In this environment, each task is a goal direction (forward/backward) and the reward at each timestep is given by the magnitude of the agent's velocity.
    \item \textbf{Half-Cheetah Random Velocity} which uses the same ``cheetah" robot, but now each task is a goal velocity. The reward at each timestep is given by the negative of the absolute difference between the current and goal velocities. 
    \item \textbf{Swimmer Random Velocity}
which simulates the dynamics of a planar ``swimmer" robot in a viscous liquid. The swimmer needs to use viscous drag to propel itself. Like with the other direction environment, each task is a goal direction (forward/backward) and the reward at each timestep is given by the magnitude of the agent's velocity.
\end{itemize}

For each of the environments, we present results using $1, 2$, and $3$ gradient steps.

Finally, we use MuJoCo~\cite{mujoco} license and perform all experiments on an internal server using $2$ NVIDIA V100 GPUs.

\end{document}